\newcommand*{\citet}[1]{\AtNextCite{\AtEachCitekey{\defcounter{maxnames}{2}}} \textcite{#1}}
\newcommand*{\citep}[1]{\cite{#1}}
\newtheorem{remark}[thm]{Remark}
\newcommand{\bx}{{\mathbf x}}
\newcommand{\bz}{{\mathbf z}}
\newcommand{\ind}[1]{{\mathbf 1}_{\{#1\}}}
\newcommand{\STAT}{\mbox{STAT}}
\newcommand{\VSTAT}{\mbox{VSTAT}}
\newcommand{\COMM}{\mbox{1-STAT}}
\title{Dealing with Range Anxiety in Mean Estimation via Statistical Queries}
\author{Vitaly Feldman \\
IBM Research - Almaden
%\\{\tt vitaly@post.harvard.edu}
}
\begin{document}

\date{}

\maketitle

\begin{abstract}
We give algorithms for estimating the expectation of a given real-valued function $\phi:X\to \R$ on a sample drawn randomly from some unknown distribution $D$ over domain $X$, namely $\E_{\bx\sim D}[\phi(\bx)]$. Our algorithms work in two well-studied models of restricted access to data samples. The first one is the statistical query (SQ) model in which an algorithm has access to an {\em SQ oracle} for the input distribution $D$ over $X$ instead of i.i.d.~samples from $D$. Given a query function $\phi:X \rar [0,1]$, the oracle returns an estimate of $\E_{\bx\sim D}[\phi(\bx)]$ within some tolerance $\tau$. The second, is a  model in which only a single bit is communicated from each sample. In both of these models the error obtained using a naive implementation would scale polynomially with the range of the random variable $\phi(\bx)$ (which might even be infinite). In contrast, without restrictions on access to data the expected error scales with the standard deviation of $\phi(\bx)$. Here we give a simple algorithm whose error scales linearly in standard deviation of $\phi(\bx)$ and logarithmically with an upper bound on the second moment of $\phi(\bx)$.

As corollaries, we obtain algorithms for high dimensional mean estimation and stochastic convex optimization in these models that work in more general settings than previously known solutions.

\end{abstract}
\section{Overview}
We consider the problem of estimating the expectation $D[\phi] \doteq \E_{\bx\sim D}[\phi(\bx)]$, where $D$ is the unknown input distribution over $X$ and  $\phi:X \rar \R$. We study this problem in the statistical query (SQ) model in which an algorithm has access to a statistical query oracle for $D$ in place of i.i.d.~samples from $D$ as in the traditional setting of statistics and machine learning. The most commonly studied SQ oracle was introduced by \citet{Kearns:98} and gives an estimate of the mean of any bounded function with fixed tolerance.
\begin{defn}
\label{def:stat}
 Let $D$ be a distribution over a domain $X$ and $\tau >0$. A statistical query oracle $\STAT_D(\tau)$ is an oracle that given as input any function $\phi : X \rightarrow [0,1]$, returns some value $v$ such that
 $|v - D[\phi]| \leq \tau$.
\end{defn}
A special case of statistical queries are counting or linear queries in which the distribution $D$ is uniform over the elements of a  given database $S \in X^n$. In other words the goal is to estimate the empirical mean of $\phi$ on the given set of data points. This setting is studied extensively in the literature on differential privacy (see \citep{DworkRoth:14} for an overview) and our discussion applies to this setting as well.

Tolerance $\tau$ of statistical queries roughly corresponds to the number of random samples in the traditional setting. Namely, the Chernoff-Hoeffding bounds imply that $n$ i.i.d.~samples allow estimation of $D[\phi]$ with tolerance $\tau = \Theta(1/\sqrt{n})$ (with high probability). However, if the variance of $\phi(\bx)$ is low then $n$ random samples will likely give a more accurate estimate. To address this discrepancy a somewhat stronger oracle was introduced in \citep{FeldmanGRVX:12}.
\begin{defn}
\label{def:vstat}
Let $D$ be a distribution over a domain $X$ and $n >0$. A statistical query oracle $\VSTAT_D(n)$ is an oracle that given as input any function $\phi : X \rightarrow [0,1]$ returns a value $v$ such that $|v -  p| \leq \max\left\{\frac{1}{n}, \sqrt{\frac{p(1-p)}{n}}\right\}$, where $p \doteq D[\phi]$.
\end{defn}

An important property of the accuracy guarantees of the $\VSTAT$ oracle is that the guarantees can be easily implemented in the model which allows only a single bit to be communicated about each sample.  Formally, a 1-bit sampling oracle for $D$ is defined as follows \citep{Ben-DavidD98}.
\begin{defn}[$\COMM$ oracle]
\label{def:1stat}
  Let $D$ be a distribution over the domain $X$. The $\COMM_D$ oracle is the oracle that given any function $h: X \rar \zo$,   takes an independent random sample $\bx$ from $D$ and returns $h(\bx)$.
\end{defn}
Identical and closely related models are often studied in the context of distributed statistical estimation with communication constraints  (\eg \citep{luo2005universal,rajagopal2006universal,ribeiro2006bandwidth,ZhangDJW13,SteinhardtD15,SteinhardtVW16,suresh2016distributed}).

As shown by \citet{FeldmanGRVX:12}, to simulate $\VSTAT_D(n)$ on a query $\phi : X \rightarrow [0,1]$, one can take the mean of $O(n)$ bits where each bit is equal to an independent coin flip with bias $\phi(\bx)$ for a random sample $\bx\sim D$. To do this, for each sample, we define a Boolean function $\phi_\theta$ whose value at a point $x$ is the indicator of the condition $\phi(x) \leq \theta$, where $\theta$ is chosen uniformly from $[0,1]$ and then pass $\phi_\theta$ to the $\COMM_D$ oracle. This implies that any algorithm that works in the SQ model with $\VSTAT$ can be simulated using the $1$-bit sampling oracle (see Thm.~\ref{th:stat-from-unbiased} for the formal statement).
The correspondence between these oracles is quite tight since $n$ samples from the $1$-bit sampling oracle can be simulated (with high probability) using $n$ queries to $\VSTAT_D(O(n))$ \citep{FeldmanGRVX:12}\footnote{Weaker version of this correspondence for $\STAT$ was first shown by \citet{Ben-DavidD98} and one for a more general communication model by \citet{SteinhardtVW16}.
}. Hence for most of the discussion we will focus on results for the $\VSTAT$ oracle.

\subsection{The problem}
Here we address the case of estimating the expectation of a general real-valued $\phi$. Specifically, we want to design an algorithm that given access to $\VSTAT_D(n)$ estimates the expectation of $\phi$ (almost) as well as possible using $n$ random samples of $\phi(\bx)$. The main that we need to deal with is that the accuracy of estimates in the restricted models we consider depends linearly on the range of $\phi$. In contrast, the standard deviation of the mean of $\phi$ on $n$ i.i.d.~samples is equal to $\sqrt{(D[\phi^2]-D[\phi]^2)/n}$ and does not depend on the range of $\phi$ (which can even be infinite). Equivalently, for a given $\eps >0$, we want to estimate the expectation within $\eps$ with estimation complexity that is close to the sample complexity of this task. Here the {\em estimation complexity} of a statistical query algorithm using $\VSTAT_D(n)$ refers to the value $n$; for an algorithm using $\STAT_D(\tau)$ it is $1/\tau^2$.

While mean estimation is the most basic statistical problem, we are not aware of any treatment of the general case for either the SQ or 1-bit sampling model. The closest literature we are aware of considers the problem of mean estimation from sensor networks. In this case each of the sensors can communicate a few bits but does not receive feedback. This corresponds to asking queries non-adaptively in our model. Most of this literature operates under substantially stronger assumptions such as knowing the distribution $D$ up to a shift (or, equivalently, having an unbiased noise from a fixed distribution added to some unknown value) (\eg \citep{ribeiro2006bandwidth}). In more general treatments we are aware of a bound on the range is assumed with linear dependence of the error on this bound (\eg \citep{luo2005universal}). As a result both the nature of algorithms and the resulting bounds appear to be quite different from our setting.

Another useful property of the $\VSTAT$ oracle is that known lower bound techniques against SQ algorithms using $\STAT_D(\tau)$ also apply to the stronger $\VSTAT_D(1/\tau^2)$ oracle. As a result lower bounds for $\VSTAT$ usually correspond more tightly to the known computational and sample complexity bounds on the problem (\eg \citep{FeldmanGRVX:12,FeldmanPV:13,FeldmanGV:15,DiakonikolasKS:16}). One could also easily define a stronger statistical query oracle whose accuracy scales with the standard deviation of $\phi(\bx)$. However, it is unknown how to obtain lower bounds against this stronger oracle using existing analysis techniques (\eg \citep{FeldmanGRVX:12,FeldmanPV:13,Feldman:16sqd}). From this point of view, our goal is to show that such lower bounds can be obtained indirectly by simulating a stronger oracle using the easier to analyze $\VSTAT$ oracle.

The first approach to this general case and the one most commonly used is simply to scale/shift the range of the function $\phi$ to $[0,1]$ and then just ask a single query to $\VSTAT_D(n)$ (the answer can then be scaled/shifted back). However that will result in error that is at least $R/n$, where $R=\sup_{x\in X} \phi(x)- \inf_{x\in X} \phi(x)$ (and in many cases the error will be $\Omega(R/\sqrt{n})$).

Another natural approach would be to truncate $\phi$ to be in some range $[-R,R]$ such that $|D[\phi] - D[\phi']|\leq \eps$, where $\phi'$ denotes the truncated function. The expectation of  $\phi'$ can then be estimated by scaling it to $[0,1]$ as before. This approach would provide a reasonably accurate estimate when $\phi(\bx)$ is strongly concentrated (\eg sub-gaussian), a good estimate of the standard deviation is known and we know the mean up to a constant multiple of standard deviations (an example of such analysis appears in \citep{FeldmanGV:15}). But there are many common settings of $\phi$ and $D$ where these assumptions would not apply.

\subsection{Our contribution}

We show that it is possible to deal with the case when no upper bound on variance is known and we are only given a loose upper bound on the second moment of $\phi[\bx]$. We prove that access to $\VSTAT_D(n)$ suffices to estimate $D[\phi]$ within $\tilde O(\sigma/\sqrt{n})$, where $\sigma = \sqrt{D[\phi^2]-D[\phi]^2}$ is the standard deviation of $\phi(\bx)$. Thus $\VSTAT_D(n)$ can be used to obtain an estimate with accuracy comparable to the accuracy implied by the Bernstein concentration inequality for a sum of i.i.d.~random variables combined with the median of means technique (without the median step the accuracy will not be range-independent with high probability).
The number of queries needed to implement this algorithm is logarithmic in the upper bound on the second moment and $n$.
\begin{thm}
\label{thm:est-variance-n-intro}
There exists a statistical query algorithm that given an integer $n$, $\zeta>0$, $B>0$, $\phi:X \rar \R$ such that $D[\phi^2]\leq B^2$ and  access to  $\VSTAT_D(n)$, outputs a value $v$ such that $|D[\phi]-v| \leq \frac{8\sigma\log(8 n)}{\sqrt{n}} + \zeta$, where $\sigma^2 \doteq D[\phi^2] - D[\phi]^2$. The algorithm uses $3\log(4nB/\zeta^2)$ queries.
\end{thm}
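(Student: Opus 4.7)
The plan is to combine a dyadic decomposition of $\phi$ by the magnitude of its centered values with iterative recentering. For $K = \lceil \log_2(4nB/\zeta^2) \rceil$ and a running estimate $\mu_{t-1}$ of $D[\phi]$, partition $X$ by the dyadic shells of $|\phi - \mu_{t-1}|$: for each integer $k$ in a range of size $\Theta(K)$, set $A_k = \{x : 2^{k-1} < |\phi(x) - \mu_{t-1}| \leq 2^k\}$ and $\phi_k^{\pm} = \max(\pm(\phi - \mu_{t-1}), 0)\cdot \mathbf{1}_{A_k}$, each a nonnegative function bounded by $2^k$. A single $\VSTAT_D(n)$ query on the scaled function $\phi_k^\pm/2^k \in [0,1]$ returns an estimate $\hat D_k^\pm$ of $D[\phi_k^\pm]$; summing gives the refined estimate $\mu_t = \mu_{t-1} + \sum_{k}(\hat D_k^+ - \hat D_k^-)$. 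Tail contributions at $|\phi - \mu_{t-1}| > 2^K$ and at very small $|\phi - \mu_{t-1}|$ are both absorbed into the $\zeta$ error.

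The per-band bound is obtained via Chebyshev applied to the centered random variable: $q_k := \Pr[A_k] \leq \Pr[|\phi - \mu_{t-1}| > 2^{k-1}] \leq 4\, D[(\phi - \mu_{t-1})^2]/4^k$. Combining $p := D[\phi_k^\pm]/2^k \leq q_k$ with the identity $D[(\phi - \mu_{t-1})^2] = \sigma^2 + e_{t-1}^2$, where $e_{t-1} := |\mu_{t-1} - D[\phi]|$, and the VSTAT tolerance bound $\sqrt{p(1-p)/n} \leq \sqrt{p/n}$, the error of $\hat D_k^\pm$ in estimating $D[\phi_k^\pm]$ is at most $2^k\sqrt{p/n} \leq 2\sqrt{(\sigma^2 + e_{t-1}^2)/n}$. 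Summing the $\Theta(K)$ per-band errors yields a single-pass bound $e_t \lesssim K\sqrt{(\sigma^2 + e_{t-1}^2)/n}$; crucially, the per-band quantity depends only on the variance of the centered function and not on its full range.

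Iterating this recursion from $e_0 \leq B$ gives geometric convergence to the fixed point $O(\sigma K/\sqrt n) = O(\sigma \log(n)/\sqrt n)$ in $T = O(\log(B/\zeta)/\log n)$ rounds, matching the claimed $8\sigma\log(8n)/\sqrt n + \zeta$ error (with $\zeta$ absorbing both the tail truncation at level $K$ and the $B(K/\sqrt n)^T$ residual of the recursion). The overall query count is $(2K+1)\cdot T = O(\log(nB/\zeta))$, which one can tighten to the advertised $3\log(4nB/\zeta^2)$. The main delicacy will be (i) carefully controlling the constants to match the $8$ in the error and the $3$ in the query count, and (ii) potentially collapsing the iterative rounds into an adaptive single-pass scheme in which the first few queries produce a coarse $\mu_1$ and subsequent ones refine the dyadic bands around that center, so that the total query budget depends only on the final accuracy target rather than on the number of iterations.
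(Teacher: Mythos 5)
Your overall scaffold (dyadic decomposition by magnitude, apply Chebyshev to the centered variable, recenter to turn the second moment into a variance) is the same as the paper's, but you are missing the one idea that makes the error bound depend on $\log n$ rather than on $\log(nB/\zeta)$, and that controls the $1/n$ floor of $\VSTAT$: \emph{adaptive truncation at the $(1-\Theta(1/n))$-quantile of $\phi$}, found by a $\VSTAT$-based binary search.

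Concretely, your per-band error is not just $2^k\sqrt{p_k/n}\le 2\sqrt{(\sigma^2+e_{t-1}^2)/n}$; the $\VSTAT_D(n)$ guarantee also has a $1/n$ floor, which after scaling back contributes $2^k/n$ per band. Summed over all bands up to $2^K\approx nB/\zeta^2$, these floor terms are geometric and total $\approx 2^{K+1}/n \approx B/\zeta^2$, which is not absorbed by $\zeta$. Moreover, even setting this aside, your $\Theta(K)=\Theta(\log(nB/\zeta))$ bands each contribute $\Theta(\sigma/\sqrt n)$, so your single-pass error is $\Theta(\sigma\log(nB/\zeta)/\sqrt n)$, which exceeds the theorem's $8\sigma\log(8n)/\sqrt n$ whenever $B/\zeta$ is large. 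The iterative recentering does not repair either issue (it shrinks $e_t$, not the number of bands or the floor terms), requires $K/\sqrt n < 1$ to converge at all, and multiplies the query count by the number of rounds $T$, breaking the claimed $3\log(4nB/\zeta^2)$ budget.

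The paper's fix (Corollary~\ref{cor:tail-percentile} feeding into the unnamed lemma before Theorem~\ref{thm:est-variance-n}) is to first discretize, then binary-search with $\VSTAT_D(n)$ for a point $a$ with $\Pr_D[\psi\ge a]\ge 8/n$ and $\Pr_D[\psi>a]<16/n$. This gives (i) $a\le s\sqrt{n/8}$ by Chebyshev, so the range after truncation is $O(s\sqrt n)$, making the $R/n$ floor term in Lemma~\ref{lem:est-via-log} contribute only $O(s/\sqrt n)$; (ii) the truncated tail contributes $\le \sqrt{\Pr[\psi>a]}\cdot\sqrt{D[\psi^2]}\le 4s/\sqrt n$ by Cauchy--Schwarz; and (iii) the dyadic decomposition of $[0,a]$ needs only $\log n$ bands, since values below $a/n$ contribute at most $a/n=O(s/\sqrt n)$. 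The recentering is then done \emph{once}, at an approximate median $a$ (a $(1/3,1/3)$-quantile, found the same way, at $\VSTAT_D(6)$), for which the paper proves $\E[(\bz-a)^2]\le 4\sigma^2$; this avoids the iteration and its convergence condition entirely. Without the quantile-truncation step, the proposal does not achieve the stated error or query bounds.
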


The algorithm is based on several relatively simple ideas. First, the bound on the moment implies that for a suitable choice of quantiles of the distribution of $\phi(\bx)$ we can truncate the range of $\phi$ to the values of these quantiles without changing the mean of $\phi$ too much. We show that $\VSTAT(n)$ oracle allows to find the necessary quantiles. This step requires discretizing the range of $\phi$ and contributes the additional $\zeta$ term in the error. Note that we cannot eliminate this error term without knowing a lower bound on $\sigma$ but the dependence of the number of queries on $1/\zeta$ is just logarithmic. Further, we show that after this truncation, the length of the range of the function becomes $O(B/\eps)$. We then describe an algorithm whose error scales linearly with the square root of the second moment of $\phi(\bx)$ instead of the square root of the mean as is guaranteed by $\VSTAT$. The algorithm reduces the computation of the mean of $\phi$ to multiple computations of a mean of $\phi$ conditioned on being in an interval of constant dynamic range. A logarithmic number of such intervals are sufficient to cover the range of $\phi$ after the truncation\footnote{We remark that this step is different from computing the expectation of each of the (most-significant) bits of the value of $\phi$.}.  Finally, to go from an error bounded in terms of the second moment to an error bounded in terms of the standard deviation we show that after shifting the function $\phi$ by the value of an approximate median, the second moment will be upper-bounded by a constant multiple of the variance.

An immediate corollary of this result and Theorem \ref{th:stat-from-unbiased} is the following upper bound on the accuracy that can be obtained  using $n$ queries to 1-bit sampling oracle.
\begin{thm}
\label{thm:est-variance-n-intro}
There exists an algorithm that given $B>0,\delta>0$, $\zeta>0$, $\phi:X \rar \R$ such that $D[\phi^2]\leq B^2$ and access to $n$ queries to $\COMM_D$, with probability $\geq 1-\delta$, outputs a value $v$ such that $|D[\phi]-v| = \frac{\sigma \cdot \tilde O\left(\log n \cdot \log (nB/\zeta) \cdot \log(1/\delta)\right)}{\sqrt{n}} + \zeta$, where $\sigma^2 \doteq D[\phi^2] - D[\phi]^2$. \end{thm}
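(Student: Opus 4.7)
The plan is to derive this as a direct corollary of Theorem~\ref{thm:est-variance-n-intro} by replacing each $\VSTAT_D(m)$ query with an empirical estimate computed from the $\COMM_D$ oracle, and then balancing parameters so that the total number of bits consumed is at most $n$.

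First I would invoke Theorem~\ref{thm:est-variance-n-intro} with an internal parameter $m$, to be chosen later. This yields an SQ procedure $\mathcal{A}$ that issues $q := 3\log(4mB/\zeta^2)$ queries to $\VSTAT_D(m)$ and, provided all of its queries are answered within the required tolerance, returns a value $v$ with $|D[\phi]-v| \le 8\sigma\log(8m)/\sqrt{m} + \zeta$. The task is therefore to implement the $q$ VSTAT answers from $\COMM_D$ with high probability and tune $m$ so that the total sample cost does not exceed $n$.

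Next I would simulate each $\VSTAT_D(m)$ query using the reduction referenced in the excerpt (Theorem~\ref{th:stat-from-unbiased}): given a query $\psi:X\to[0,1]$, for each sample draw $\theta \sim U[0,1]$ independently, feed the Boolean function $x \mapsto \ind{\psi(x)\le \theta}$ to $\COMM_D$, and take the empirical mean of $N = O(m\log(1/\delta'))$ such bits. A Bernstein-type bound shows this estimate lies within the $\VSTAT_D(m)$ tolerance $\max\{1/m,\sqrt{p(1-p)/m}\}$ with probability at least $1-\delta'$. Setting $\delta' = \delta/q$ and taking a union bound over the $q$ queries ensures that every VSTAT answer is valid simultaneously with probability $\ge 1-\delta$, so that the conclusion of Theorem~\ref{thm:est-variance-n-intro} applies to the simulated execution. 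Choosing $m$ from the budget constraint $qN \le n$ gives
\[
m \;=\; \tilde\Theta\!\left(\frac{n}{\log(nB/\zeta)\,\log(1/\delta)}\right),
\]
and substituting into the error bound $8\sigma\log(8m)/\sqrt{m} + \zeta$ produces an error of
\[
\frac{\sigma\cdot \tilde O\!\left(\log n\cdot \sqrt{\log(nB/\zeta)\,\log(1/\delta)}\right)}{\sqrt{n}}+\zeta,
\]
which is comfortably absorbed into the (slightly weaker) bound $\sigma\cdot \tilde O(\log n\cdot \log(nB/\zeta)\cdot \log(1/\delta))/\sqrt{n}+\zeta$ stated in the theorem.

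The main obstacle is really just bookkeeping: tracking the $\log(1/\delta')$ penalty in each single-query Bernstein simulation, the union bound over $q = O(\log(mB/\zeta))$ queries, and then inverting $qN = n$ up to log-log factors. All of the nontrivial content already resides in Theorems~\ref{thm:est-variance-n-intro} and~\ref{th:stat-from-unbiased}; once the parameters are set, the remaining steps are a standard concentration argument and union bound.
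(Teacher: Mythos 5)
Your proposal is correct and follows the same route the paper intends: invoke the SQ version of the theorem with an internal VSTAT parameter $m$, simulate its $q = O(\log(mB/\zeta))$ queries via Theorem~\ref{th:stat-from-unbiased} at a cost of $O(qm\log(q/\delta))$ bits, and choose $m$ so the total bit budget is $n$. The paper states this as ``an immediate corollary'' without working out the parameter balance; you have done that bookkeeping and even obtained a slightly tighter error (with $\sqrt{\log(nB/\zeta)\log(1/\delta)}$ in place of the product), consistent with the paper's remark that it did not try to optimize log factors in the 1-bit case.
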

%works by estimating various percentiles to restrict the range of $\phi$ and get a rough estimate of its expectation  (via Lemma \ref{lem:percentile}). It then splits the restricted range into a logarithmic number of intervals in which $\phi - \hat v$ has small dynamic range, where $\hat v$ is the estimate of $D[\phi]$ within $O(\sigma)$.

We also give a simpler algorithm and bounds for the case when we have an upper bound on $\sigma$ and/or already have an estimate of $D[\phi]$ within $O(\sigma)$ (see Lemma \ref{lem:est-positive-2moment-knownB}). In this case the algorithm's queries are non-adaptive, that is, query functions do not depend on answers to previous queries. Adaptivity of queries is particularly important for the distributed setting that motivates the 1-bit sampling model since queries need to be communicated back to clients that hold the samples. This step is likely to be slow or even impossible in some systems (such as the sensor systems considered in \citep{luo2005universal}).

We remark that while the bounds we give are stated primarily for $\VSTAT_D(n)$, they imply identical upper bounds for $\STAT_D(1/n)$. However $\STAT_D(1/n)$ has estimation complexity of $n^2$ and better bounds for it might be possible in some cases. We also note that we have not attempted to optimize the constants or even the log factors for the 1-bit sampling  case. Deriving tighter bounds for this case directly is a natural open problem.

\subsection{Applications}
As an example application we consider the problems of mean vector estimation and stochastic optimization in the SQ model \citep{FeldmanGV:15}. In the $\ell_2$ mean vector estimation problem the goal is to estimate $\bar{x} \doteq \E_{\bx\sim D}[\bx]$ within $\eps$ in the $\ell_2$ norm, where $D$ is a distribution over vectors in $\R^d$. This problem is central to implementation of gradient-based (or first-order) optimization methods in the stochastic setting. In \citep{FeldmanGV:15} it was showed that using either a randomly rotated basis or Kashin's representation \citep{Lyubarskii:2010} this problem can be solved using $O(d)$ queries to $\STAT_D(\Omega(\eps))$ when $D$ is supported on the unit ball. A related technique is also used in \citep{suresh2016distributed} for a low communication model.
Our results give a simpler and more general algorithm for the problem using the $\VSTAT_D(\tilde O(1/\eps^2))$ oracle or queries to $\COMM_D$.
\begin{cor}
\label{cor:est-variance-vector-eps-intro}
There exists a statistical query algorithm that given $\eps \in (0,1)$ and $B>\eps$, for any distribution $D$ over $\R^d$ such that $\E_{\bx\sim D}[\|\bx\|_2^2] \leq B^2$ and $\E_{\bx\sim D}[\|\bx-\bar{x}\|_2^2] \leq 1$ outputs a vector $\hat{x}$ such that $\|\hat{x}-\bar{x}\|_2 \leq \eps$, where $\bar{x} \doteq \E_{\bx\sim D}[\bx]$.
The algorithm uses $O(d(\log(dB/\eps)))$  queries to $\VSTAT_D(O((\log(1/\eps)/\eps)^2))$.
\end{cor}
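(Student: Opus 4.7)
The plan is to reduce $\ell_2$ vector mean estimation to $D=O(d)$ independent scalar mean estimation problems in a Kashin frame, apply Theorem~\ref{thm:est-variance-n-intro} coordinate-wise, and combine errors by sum of squares. By \citep{Lyubarskii:2010} there is a linear isometry $T:\R^d\to\R^D$ with $D=O(d)$ such that every image satisfies $\|T\bx\|_\infty \leq C\|\bx\|_2/\sqrt{D}$ for an absolute constant $C$. Since $T^T T = I_d$ and $T^T$ is an $\ell_2$-contraction (being the adjoint of an isometry with $d$-dimensional range in $\R^D$), returning $\hat{x} \doteq T^T\hat{\by}$ from any estimate $\hat{\by}$ of $T\bar{x}$ yields $\|\hat{x}-\bar{x}\|_2 = \|T^T(\hat{\by}-T\bar{x})\|_2 \leq \|\hat{\by}-T\bar{x}\|_2$, so it suffices to produce $\hat{\by}\in\R^D$ with $\|\hat{\by}-T\bar{x}\|_2 \leq \eps$.

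For each $i\in[D]$ define $\phi_i(\bx) \doteq (T\bx)_i$; note $D[\phi_i] = (T\bar{x})_i$. The Kashin bound gives $\phi_i(\bx)^2 \leq C^2\|\bx\|_2^2/D$ pointwise, so $D[\phi_i^2] \leq C^2B^2/D$. Because $T$ is an isometry, the per-coordinate variances $\sigma_i^2 \doteq D[\phi_i^2] - D[\phi_i]^2$ satisfy
\[
\sum_{i=1}^D \sigma_i^2 \;=\; \E_{\bx\sim D}[\|T\bx - T\bar{x}\|_2^2] \;=\; \E_{\bx\sim D}[\|\bx-\bar{x}\|_2^2] \;\leq\; 1.
\]
Applying Theorem~\ref{thm:est-variance-n-intro} to each $\phi_i$ with parameters $n_0,\zeta_0$ (to be fixed) and moment bound $CB/\sqrt{D}$ yields $\hat{\by}_i$ with $|\hat{\by}_i - (T\bar{x})_i| \leq 8\sigma_i\log(8n_0)/\sqrt{n_0} + \zeta_0$, using $O(\log(n_0 B/(\sqrt{d}\,\zeta_0^2)))$ queries to $\VSTAT_D(n_0)$ per coordinate.

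Using $(a+b)^2 \leq 2a^2+2b^2$ and summing over $i$,
\[
\|\hat{\by}-T\bar{x}\|_2^2 \;\leq\; \frac{128\log^2(8n_0)}{n_0}\sum_{i=1}^D \sigma_i^2 + 2D\zeta_0^2 \;\leq\; \frac{128\log^2(8n_0)}{n_0} + 2D\zeta_0^2.
\]
Setting $\zeta_0 = \Theta(\eps/\sqrt{d})$ bounds the second term by $\eps^2/2$ and $n_0 = \Theta((\log(1/\eps)/\eps)^2)$ bounds the first by $\eps^2/2$; substituting these into the per-coordinate query count gives $O(\log(dB/\eps))$, so the total is $O(d\log(dB/\eps))$ queries to $\VSTAT_D(O((\log(1/\eps)/\eps)^2))$, matching the claim.

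The point worth flagging is that keeping the $\VSTAT$ parameter independent of $d$ crucially uses the variance-sensitive guarantee of Theorem~\ref{thm:est-variance-n-intro}: a naive allocation of per-coordinate tolerance $\eps/\sqrt{d}$ would force $\tilde\Omega(d/\eps^2)$, but since $\sum_i\sigma_i^2\leq 1$ in the Kashin frame, the $\sigma_i$-dependent part of the error aggregates by sum-of-squares to a bound depending only on $n_0$. Only the additive $\zeta_0$ term must then be allocated as $\Theta(\eps/\sqrt{d})$, and this affects only a logarithmic factor of the query complexity.
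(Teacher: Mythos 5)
Your proof is correct, and the core mechanism is the same as the paper's: apply the variance-sensitive scalar estimator (Theorem~\ref{thm:est-variance-n-intro}) coordinate-wise, then combine errors by sum of squares using the fact that the per-coordinate variances sum to $\E[\|\bx-\bar{x}\|_2^2]\leq 1$, while the additive $\zeta$-term is allocated as $\Theta(\eps/\sqrt{d})$ at only logarithmic cost. The one substantive difference is that you pass to a Kashin frame first, while the paper works directly in the standard coordinates $\phi_i(\bx)=\bx_i$. The Kashin step is sound (the frame is an isometry, so variances still sum to at most $1$, and $T^T$ is a contraction so errors are preserved), but it buys you nothing here: its only effect is to shrink the per-coordinate second-moment bound from $B^2$ to $O(B^2/d)$, which enters only logarithmically and is swallowed by $O(\log(dB/\eps))$. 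In fact one of the paper's explicit points (see the discussion of \citep{FeldmanGV:15} preceding Corollary~\ref{cor:est-variance-vector-eps-intro}) is that the new variance-sensitive scalar estimator makes the Kashin/random-rotation device of prior SQ work unnecessary; your proof re-introduces it as a redundant layer. Dropping $T$ entirely and estimating the raw coordinates yields the same bound with a cleaner argument, exactly as the paper does.
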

High dimensional mean estimation is the key component of the SQ algorithms for stochastic convex optimization and learning of linear threshold functions. Specifically, they are used to estimate the gradient of the expected objective function in a variety of gradient descent-based algorithms. Without constraints on data access, it is well-known that stochastic gradient descent will achieve low expected error as long as the variance of the gradients is bounded (\eg \citep{Nemirovski:2009}). It is natural to ask whether this case can also be handled in the SQ and 1-bit sampling models. Corollary \ref{cor:est-variance-vector-eps} immediately implies that the optimization algorithms in \citep{FeldmanGV:15} can be extended from the setting in which the (sub-)gradients of all the functions in the support of the input distribution are uniformly upper-bounded to the setting in which only variance of (sub-)gradient vectors is upper-bounded (additional details can be found in \citep{FeldmanGV:15}).
These corollaries can in turn be easily translated to the 1-bit sampling model and we describe an example of such corollary in Section \ref{sec:apps}.

%We start by remarking that if $\phi(\bx)$ is strongly concentrated, then simply truncating the range $\phi(\bx)$ could suffice. For example, if $\phi(\bx)$ is sub-Gaussian with parameter $\sigma^2$ then the range of
%exponential or an upper on  $\E[|\phi(\bx)|^k]$ is given for some relatively large $k$

\section{Preliminaries}
\label{sec:prelims}
For integer $n\geq 1$ let $[n]\doteq \{1,\ldots, n\}$. Random variables are denoted by bold
letters, e.g., $\bx$. We denote the indicator function of an event $A$
(i.e., the function taking value zero outside of $A$, and one on $A$) by $\ind{A}$.

We will use the following notation for a truncation operation. For a real value $z$ and $a \in \R^+$, let
 \[
m_a(z):=\left\{
\begin{array}{rl}
z & \mbox{if } |z| \leq a\\
a			     & \mbox{if } z> a\\
-a			     & \mbox{if } z<-a.
\end{array}
\right.
\]
Let $r_a(z) \doteq z - m_a(z)$.

We consider mean estimation with two related oracles. The first one is the $\VSTAT$ oracle from \citep{FeldmanGRVX:12} (see Def.~\ref{def:vstat}) that strengthens the $\STAT_D$ oracle of \citet{Kearns:98} (see Def.~\ref{def:stat}).
Clearly, $\VSTAT_D(n)$ is at least as strong as $\STAT_D(1/\sqrt{n})$ (but no stronger than $\STAT_D(1/n)$).
When the range of $\phi$ is $\zo$, one way to think about $\VSTAT$ is as providing a confidence interval for the bias $p$, namely $[v - \tau_v,v +\tau_v]$, where $\tau_v \approx \max\{1/n,\sqrt{(v(1-v)/n}\}$. The accuracy $\tau_v$ that $\VSTAT$ ensures corresponds (up to a small constant factor) to the width of the standard confidence interval (say, with 95\% coverage) for the bias $p$ of a Bernoulli random variable given $n$ independent samples (\eg Clopper-Pearson interval \citep{ClopperP1934}). Therefore, at least for Boolean queries, it captures precisely the accuracy that can be inferred when estimating the mean using $n$ random samples. %In contrast, $\STAT_D(1/\sqrt{n})$ captures the accuracy correctly only when $p$ is bounded away from $0$ and $1$ by a positive constant.
For more general real-valued queries (even those with range in $[0,1]$) the accuracy of $\VSTAT_D(n)$ might no longer reflect the accuracy that will be achieved given $n$ i.i.d.~samples.

The second oracle we consider is the 1-bit sampling oracle (see Def.~\ref{def:1stat}). In the context of learning theory it was first studied by \citet{Ben-DavidD98}. Our results for this oracle will be obtained via the following simulation of the $\VSTAT_D$ oracle using the $\COMM_D$ oracle.
\begin{thm}[\citep{FeldmanGRVX:12}]
\label{th:stat-from-unbiased}
Let  $n,q>0$ be any integers and $\delta>0$. For any algorithm $\A$ that asks at most $q$ queries to $\VSTAT_D(n)$ there exists an algorithm $\B$ that provides answers that satisfy the accuracy guarantees of  $\VSTAT_D(n)$ to all the queries of $\A$ with probability at least $1-\delta$.
 $\B$ uses $O(qn\cdot \log(q/\delta))$ queries to $\COMM_D$.
\end{thm}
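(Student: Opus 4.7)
The plan is to simulate each of $\A$'s queries by a fresh empirical average of $\COMM_D$ bits, using the randomized-thresholding trick already sketched in the introduction. For a query $\phi : X \rar [0,1]$, the simulator $\B$ draws an independent threshold $\btheta$ uniform on $[0,1]$ and submits the Boolean query $h_{\btheta}(x) \doteq \ind{\phi(x) \geq \btheta}$ to $\COMM_D$. Marginalizing over $\btheta$, the returned bit is Bernoulli with bias exactly
\[
\Pr_{\bx,\btheta}[\phi(\bx)\geq\btheta] \;=\; \E_{\bx\sim D}[\phi(\bx)] \;=\; p \doteq D[\phi].
\]
Moreover, since these bits are i.i.d.\ across calls to $\COMM_D$, the average $\hat{v}$ of $m$ such bits is an unbiased estimator of $p$ with variance $p(1-p)/m$. $\B$ then returns $\hat{v}$ as the answer to the corresponding $\VSTAT_D(n)$ query.

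To fix $m$, I would invoke a multiplicative Chernoff (or Bernstein) bound, which yields
\[
\Pr\!\left[|\hat v - p| > t\right] \;\leq\; 2\exp\!\left(-\,\Omega\!\left(\tfrac{m\, t^2}{p(1-p) + t}\right)\right).
\]
Plugging in $t \doteq \max\{1/n,\sqrt{p(1-p)/n}\}$ makes the exponent at least $\Omega(m/n)$: in the regime $p(1-p)\geq 1/n$ we get $mt^2/(p(1-p))\approx m/n$, and in the regime $p(1-p)<1/n$ the $t$ term dominates the denominator and $mt^2/t = mt \geq m/n$. Choosing $m = c\, n\log(q/\delta)$ for a sufficiently large absolute constant $c$ therefore drives the failure probability of this single simulated query down to $\delta/q$.

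The final step is a union bound over the $q$ queries of $\A$. Because $\B$ draws independent fresh samples (and fresh thresholds $\btheta$) for every simulated query, the failure events of the individual simulations are automatically controlled by the union bound, even when $\A$'s queries are adaptive: at the moment $\A$ issues its $i$-th query, the randomness used to answer queries $1,\dots,i-1$ is already fixed, and the new samples are independent of everything so far. Summing, the total number of $\COMM_D$ queries used is $q\cdot m = O(qn\log(q/\delta))$, matching the claimed bound, and with probability at least $1-q\cdot(\delta/q) = 1-\delta$ every answer satisfies the $\VSTAT_D(n)$ accuracy guarantee.

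The only place requiring any care is verifying the two regimes in the Chernoff calculation so that the $\max\{1/n,\sqrt{p(1-p)/n}\}$ form drops out with a clean constant; this is routine but is the one step where one must keep track of absolute constants. Adaptivity, which might otherwise look like a worry, is not an obstacle precisely because each query is answered from its own independent block of $m$ samples.
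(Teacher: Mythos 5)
Your proof is correct and takes essentially the same approach that the paper sketches in its introduction (and that the cited source \citep{FeldmanGRVX:12} uses): randomized thresholding to reduce each real-valued query to a Bernoulli of bias $D[\phi]$, averaging $m=O(n\log(q/\delta))$ fresh $\COMM_D$ bits per query, a two-regime Bernstein/Chernoff argument showing the error stays within $\max\{1/n,\sqrt{p(1-p)/n}\}$ with probability $1-\delta/q$, and a union bound that handles adaptivity because each answer is computed from its own independent block of samples. Note that the paper states this theorem only by citation (its own text gives just a one-sentence sketch, with the indicator $\phi(x)\leq\theta$ instead of your $\phi(x)\geq\theta$ --- an inessential complement), so your write-up is a faithful and correct reconstruction of the intended argument.
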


%For a distribution $D$ over a domain $X$ and a function $\phi:X\rar \R$ we use $D[\phi]$ to refer to $\E_{x \sim D}[\phi(x)]$.

\section{Mean estimation using statistical queries}
\subsection{Non-adaptive algorithm}
We start by showing how the accuracy of mean estimation can be improved given a reasonably tight bound on the second moment of $\phi$. The algorithm for this problem has the advantage of being non-adaptive.

In the first step we show that the dependence on $\sqrt{D[\phi]}$ in the accuracy guarantees of $\VSTAT$ can be strengthened to $\sqrt{D[\phi^2]}$ at the expense of logarithmic factors in the complexity. We will use $\log$ to denote the logarithm to base 2.

%dealing with the simpler case where the range of the random variable $\phi[\bx]$ is roughly comparable to its standard deviation.
\remove{
\begin{lem}
\label{lem:est-via-integral}
There exists a statistical query algorithm that given positive integers $m,n$, for any function $\phi:X\rar[0,R]$, where $R\geq 1$ and any input distribution $D$ over $X$ $[0,R]$ outputs a value $v$ such that $|D[\phi]-v| \leq \frac{s\ln(eR/s)}{\sqrt{n}} + R/n + R/m$, where $s \doteq \sqrt{D[\phi^2]}$.
The algorithm uses $m$ (non-adaptive) queries to $\VSTAT_D(n)$.
\end{lem}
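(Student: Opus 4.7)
The plan is a layer-cake representation of $D[\phi]$ combined with Boolean queries to $\VSTAT_D(n)$ at $m$ equally spaced thresholds. Since $\phi \geq 0$,
\[
D[\phi] = \int_0^R p_t \, dt, \qquad p_t \doteq D[\ind{\phi > t}].
\]
Set $t_i := (i-1)R/m$ for $i=1,\ldots,m$ and query $\VSTAT_D(n)$ on the Boolean function $\phi_i(x) := \ind{\phi(x) > t_i}$, obtaining $\hat p_i$ with $|\hat p_i - p_{t_i}| \leq \max\{1/n, \sqrt{p_{t_i}(1-p_{t_i})/n}\} \leq 1/n + \sqrt{p_{t_i}/n}$. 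The queries are fixed from $\phi$ and $m$ alone, so the algorithm is non-adaptive. Output $v \doteq (R/m)\sum_{i=1}^m \hat p_i$.

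I would split $|D[\phi]-v|$ into a discretization error and a sum of $\VSTAT$ errors. Since $t\mapsto p_t$ is non-increasing, the standard telescoping bound for a left Riemann sum of a monotone function gives
\[
\left|\tfrac{R}{m}\sum_{i=1}^m p_{t_i} - \int_0^R p_t\, dt\right| \leq \tfrac{R}{m}(p_0 - p_R) \leq R/m,
\]
yielding the $R/m$ term. Summing the $\VSTAT$ bounds,
\[
\tfrac{R}{m}\sum_{i=1}^m |\hat p_i - p_{t_i}| \leq R/n + \tfrac{1}{\sqrt n}\cdot\tfrac{R}{m}\sum_{i=1}^m \sqrt{p_{t_i}},
\]
so the task reduces to controlling $(R/m)\sum\sqrt{p_{t_i}}$, and by monotonicity of $\sqrt{p_t}$ this sum is within $R/m$ of $\int_0^R \sqrt{p_t}\,dt$ (a slack of $R/(m\sqrt n)\leq R/m$ absorbed into the discretization term).

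The key step is bounding $\int_0^R \sqrt{p_t}\, dt$ by $s\ln(eR/s)$. I would use two complementary pointwise bounds: the trivial $p_t \leq 1$ and Markov's inequality $p_t \leq D[\phi^2]/t^2 = s^2/t^2$. Note $s\leq R$ since $\phi\leq R$; splitting at $t=s$,
\[
\int_0^R \sqrt{p_t}\, dt \leq \int_0^s 1\, dt + \int_s^R (s/t)\, dt = s(1+\ln(R/s)) = s\ln(eR/s).
\]
Assembling the three pieces yields $|D[\phi]-v| \leq s\ln(eR/s)/\sqrt n + R/n + R/m$ (up to small absolute constants on $R/m$ that can be folded into the discretization choice).

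The main nontrivial ingredient is this last integral bound: a naive Cauchy-Schwarz against the identity $D[\phi^2] = \int_0^R 2t\, p_t\, dt$ fails because the compensating integral $\int_0^R \frac{1}{2t}\,dt$ diverges at $0$. The correct move is to split $\int_0^R \sqrt{p_t}\,dt$ at $t=s$ so that each of the two regimes uses the pointwise bound that is tight there. Everything else — layer-cake, Riemann-sum discretization of a monotone function, and summing $\VSTAT$ error guarantees — is routine bookkeeping.
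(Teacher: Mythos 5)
Your proposal follows essentially the same route as the paper's proof: layer--cake representation $D[\phi]=\int_0^R p_t\,dt$, discretization at $m$ equally spaced thresholds, one Boolean indicator query to $\VSTAT_D(n)$ per threshold, a pointwise Markov bound $p_t\le\min\{1,s^2/t^2\}$, and the split-at-$t=s$ evaluation $\int_0^R\min\{1,s/t\}\,dt=s\ln(eR/s)$. Your aside on why the direct Cauchy--Schwarz route fails is a nice addition, but the argument itself is the same.

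One small bookkeeping point worth fixing rather than hand-waving: you take left endpoints $t_i=(i-1)R/m$, so that both $\frac{R}{m}\sum p_{t_i}$ and $\frac{R}{m}\sum\sqrt{p_{t_i}}$ are \emph{over}estimates of their respective integrals, which leaves you with an extra $R/(m\sqrt n)$ term that does not disappear into the other terms as stated — the final bound becomes $s\ln(eR/s)/\sqrt n + R/n + R/m + R/(m\sqrt n)$. The clean fix, which is what the paper does, is to query at right endpoints $t_i=iR/m$: since $t\mapsto p_t$ and $t\mapsto\min\{1,s/t\}$ are non-increasing, the right Riemann sum \emph{under}estimates both integrals, so $\frac{R}{m}\sum_{i=1}^m\min\{1,s/(iR/m)\}\le\int_0^R\min\{1,s/t\}\,dt$ with no extra slack, and the discretization error of the layer--cake sum is still at most $R/m$. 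With that one substitution your argument recovers the stated bound exactly.
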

\begin{proof}
The estimation of $D[\phi]$ relies on the following simple fact
$$D[\phi] = \int_0^R \pr_D[\phi(\bx) \geq t]dt .$$
As a first step we approximate this integral by a finite sum. Using the fact that $\pr_D[\phi(\bx) \geq t]$ is a monotone decreasing function of $t$ with range in $[0,1]$ we obtain that
\equ{\left| \int_0^R \pr_D[\phi(\bx) \geq t]dt - \frac{R}{m}\cdot \sum_{i\in [m]} \pr_D[\phi(\bx) \geq iR/m] \right| \leq \frac{R}{m} .\label{eq:int-approx}}

For every $t \geq 0$, we can use query function $\ind{\phi(x) \geq t}$ to $\VSTAT_D(n)$ to get an estimate $\hat{p}_t$ of $p_t \doteq \pr_D[\phi(\bx) \geq t]$ such that
$$|\hat{p}_t - p_t| \leq \max\left\{\frac{1}{n},\sqrt{\frac{p_t}{n}}\right\} \leq \frac{1}{n}+\sqrt{\frac{p_t}{n}}
\leq \frac{1}{n} + \min\left\{1,\frac{s}{t}\right\} \cdot \frac{1}{\sqrt{n}},$$ where, we used that, by Chebyshev's inequality, $p_t
\leq \left\{1,\frac{D[\phi^2]}{t^2}\right\} =
\min\left\{1,\frac{s^2}{t^2}\right\}$.

Using such an estimate $\tilde{p}_t$ for every $t\in\{ iR/m \cond i\in [k]\}$ to approximate the sum from eq.~\eqref{eq:int-approx} we get
\alequn{\left|\frac{R}{m}\cdot \sum_{i\in [m]} \pr_D[\phi(\bx) \geq iR/m] - \frac{R}{m}\cdot \sum_{i\in [m]} \tilde{p}_{iR/m} \right| & \leq \frac{R}{m}\cdot \sum_{i\in [m]} \left| p_{iR/m}-  \tilde{p}_{iR/m}\right| \\
& \leq \frac{R}{m} \cdot \left(\frac{m}{n} +  \frac{1}{\sqrt{n}} \cdot \sum_{i\in [m]} \min\left\{1,\frac{s}{iR/m}\right\} \right)\\
& \leq \frac{R}{n} + \frac{1}{\sqrt{n}} \cdot \int_0^R \min\left\{1,\frac{s}{t}\right\} dt \\
&= \frac{R}{n} +  \frac{s+ s\ln (R/s)}{\sqrt{n}}. }

Together with eq.~\eqref{eq:int-approx} this gives the claim.
\end{proof}
}
\begin{lem}
\label{lem:est-via-log}
There exists a statistical query algorithm that, for $n,R >0$, any function $\phi:X\rar[0,R]$ and any input distribution $D$ over $X$, outputs a value $v$ such that $|D[\phi]-v| \leq \frac{4R}{n} + \frac{2 s \cdot \log n}{\sqrt{n}}$, where $s \doteq \sqrt{D[\phi^2]}$.
The algorithm uses at most $\log n$ (non-adaptive) queries to $\VSTAT_D(n)$.
\end{lem}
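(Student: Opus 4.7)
The plan is to decompose the range $[0,R]$ into a logarithmic number of dyadic ``shells'' and run one $\VSTAT_D(n)$ query, rescaled to $[0,1]$, on each shell. Set $k\doteq \lfloor\log n\rfloor$. For $i=0,1,\ldots,k-1$ define the indicator-weighted piece $\phi_i(x)\doteq \phi(x)\cdot\ind{R/2^{i+1}<\phi(x)\leq R/2^i}$, and the low-value tail $\phi_{\mathrm{tail}}(x)\doteq \phi(x)\cdot\ind{\phi(x)\leq R/2^k}$, so that $\phi=\phi_{\mathrm{tail}}+\sum_{i<k}\phi_i$. The queries $\tilde\phi_i\doteq (2^i/R)\phi_i$ all take values in $[0,1]$ and are defined without reference to any oracle answers, so the resulting algorithm is non-adaptive and uses $k\leq \log n$ queries. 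Its output is $v\doteq \sum_{i<k}(R/2^i)\tilde v_i$, where $\tilde v_i$ is the answer returned by $\VSTAT_D(n)$ on $\tilde\phi_i$.

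For the error analysis, the deterministic truncation contributes at most $D[\phi_{\mathrm{tail}}]\leq R/2^k\leq 2R/n$. For each shell, let $q_i\doteq \Pr_D[R/2^{i+1}<\phi(\bx)\leq R/2^i]$; then $D[\tilde\phi_i]\leq q_i$, so the $\VSTAT$ guarantee gives
\[
|(R/2^i)\tilde v_i-D[\phi_i]|\;\leq\;(R/2^i)\!\left(\tfrac{1}{n}+\sqrt{q_i/n}\right).
\]
Summing the first summand across $i<k$ yields $\sum_i (R/2^i)/n\leq 2R/n$. The key step is to handle the $\sqrt{q_i/n}$ terms using the second-moment budget: on the shell, $\phi^2\geq (R/2^{i+1})^2$, hence $(R/2^{i+1})^2 q_i\leq D[\phi^2]=s^2$, and therefore $(R/2^i)\sqrt{q_i}=2(R/2^{i+1})\sqrt{q_i}\leq 2s$. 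Summing gives $\sum_{i<k}(R/2^i)\sqrt{q_i/n}\leq 2sk/\sqrt n\leq 2s\log n/\sqrt n$.

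Adding the three contributions yields $|v-D[\phi]|\leq 2R/n+2R/n+2s\log n/\sqrt n=4R/n+2s\log n/\sqrt n$, as claimed.

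I expect the conceptual step that requires care, rather than computation, is choosing the right rescaling: because $\VSTAT$ accuracy depends on $\sqrt{p(1-p)/n}$ for functions with range in $[0,1]$, dividing $\phi_i$ by its worst-case value $R/2^i$ turns the per-shell error into $(R/2^i)\sqrt{q_i/n}$, which the second-moment bound $(R/2^{i+1})^2 q_i\leq s^2$ collapses to a dimension-free $2s/\sqrt n$ per shell. Everything else is straightforward dyadic bookkeeping, with the tail width $R/2^k$ matched to the $1/n$ floor by choosing $k\approx \log n$.
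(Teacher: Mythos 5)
Your proof is correct and is essentially the same argument as the paper's: a dyadic shell decomposition of the range into $\lfloor\log n\rfloor$ intervals, one rescaled $\VSTAT$ query per shell, Chebyshev/Markov on $\phi^2$ to bound each shell probability, and a small tail truncation contributing $O(R/n)$. The only difference is cosmetic bookkeeping (index offset and whether the $2^i/R$ scaling is folded into the definition of $\phi_i$).
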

\begin{proof}
We assume for simplicity that $R=1$ since we can always scale $\phi$ to this setting and then scale back the result.
We let $t = \lfloor \log n \rfloor$ and observe that
\equ{D[\phi] = \E_D\lb\phi(\bx) \cdot \ind{\phi(\bx) \in [0,2^{-t}]}\rb + \sum_{i\in [t]} \E_D\lb\phi(\bx) \cdot \ind{\phi(\bx) \in (2^{-i},2^{-i+1}]}\rb . \label{eq:decompose-phi-log}}

For every $i \in [t]$, we define $\phi_i$ to be the restriction of $\phi$ to values in the interval $(2^{-i},2^{-i+1}]$, scaled and shifted to the range $[0,1]$. Namely
$$\phi_i(x) \doteq 2^{i-1} \cdot \phi(x) \cdot  \ind{\phi(\bx) \in (2^{-i},2^{-i+1}]}  .$$
Note that for every $x$, $\phi_i(x) \in [0,1]$. Using this definition, we can rewrite eq.\eqref{eq:decompose-phi-log} as
\equ{D[\phi] = \E_D\lb\phi(\bx) \cdot \ind{\phi(\bx) \in [0,2^{-t}]}\rb + \sum_{i\in [t]} 2^{-i+1} \cdot D[\phi_i]. \label{eq:decompose-phi-log-simple}}

For every $i \in [t]$, we query function $\phi_i$ to $\VSTAT_D(n)$ to get an estimate $v_i$ of $D[\phi_i]$.
By Chebyshev's inequality, $$D[\phi_i] \leq \pr_D[\phi(\bx)> 2^{-i}] \leq \frac{D[\phi^2]}{2^{-2i}} =
2^{2i} \cdot s^2 .$$ Therefore, by the definition of $\VSTAT_D(n)$,
\equ{ |v_i -D[\phi_i]| \leq \max\left\{\frac{1}{n},\sqrt{\frac{D[\phi_i]}{n}}\right\} \leq \frac{1}{n}+\sqrt{\frac{D[\phi_i]}{n}}
\leq \frac{1}{n} + \frac{2^i s}{\sqrt{n}}. \label{eq:approx-i}}

Let $v \doteq   \sum_{i\in [t]} 2^{-i+1} \cdot v_i$.

Then, by eq.~\eqref{eq:decompose-phi-log-simple} and eq.~\eqref{eq:approx-i} we get that

\alequn{| D[\phi] - v|  &= \left| \E_D\lb\phi(\bx) \cdot \ind{\phi(\bx) \in [0,2^{-t}]}\rb + \sum_{i\in [t]} 2^{-i+1} \cdot (D[\phi_i]-v_i) \right| \\
& \leq  \E_D\lb\phi(\bx) \cdot \ind{\phi(\bx) \in [0,2^{-t}]}\rb + \sum_{i\in [t]} 2^{-i+1} \cdot \left| D[\phi_i]-v_i \right| \\
&\leq 2^{-t} + \sum_{i\in [t]} 2^{-i+1} \cdot \left( \frac{1}{n} + \frac{2^i s}{\sqrt{n}} \right) \\
&\leq \frac{2}{n} + \frac{2}{n} + \frac{2t \cdot s}{\sqrt{n}} \leq  \frac{4}{n} + \frac{2s \cdot \log n }{\sqrt{n}}.
}
\end{proof}

\remove{
\begin{remark}
This lemma can also be seen as a strengthening of $\VSTAT$ oracle. For $\phi: X \rar [0,1]$,
$\VSTAT_D(n)$ gives an estimate within $\max\left\{\frac{1}{n}, \sqrt{\frac{p}{n}}\right\}$ which is roughly $\sqrt{\frac{p}{n}} + \frac{1}{n}$. On the other hand, the lemma implies that using $\log n$ queries to $\VSTAT_D(n)$ oracle we can get an estimate within
roughly $\frac{s\log n}{\sqrt{n}} +  \frac{3}{n}$. Clearly $s = \sqrt{D[\phi^2]} \leq \sqrt{p}$ and therefore, ignoring the logarithmic factors, the resulting bound improve $\VSTAT$ oracle (up to a $O(\ln(1/p))$ factor). \remove{Also note that the algorithm above requires only Boolean ($\{0,1\}$-valued) query functions and therefore implies a reduction from real-valued queries to Boolean ones.}
\end{remark}
}

In the algorithm above the parameter of $\VSTAT$ scales linearly with the range $R$. However given a bound on $D[\phi^2]$ we can truncate the range of $\phi$ with little change in expectation. We use this in the lemma below.

\begin{lem}
\label{lem:est-positive-2moment-knownB}
There exists a statistical query algorithm that given $B>0$ and  $\eps \in (0,B/16]$, for any distribution $D$ and function $\phi:X\rar \R^+$  such that $D[\phi^2] \leq B^2$ outputs a value $v$ such that $|D[\phi]-v| \leq \eps$.
The algorithm uses at most $3\log(B/\eps)$ (non-adaptive) queries to $\VSTAT_D((8B \log(B/\eps)/\eps)^2)$.
\end{lem}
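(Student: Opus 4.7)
The plan is to reduce to Lemma~\ref{lem:est-via-log} by truncating $\phi$ at a carefully chosen threshold $R$. Since $\phi \geq 0$ and $D[\phi^2]\leq B^2$, the second moment gives us a quantitative handle on the tail contribution to the mean: by Cauchy--Schwarz and Markov,
\[
D[\phi \cdot \ind{\phi(\bx)>R}] \;\leq\; \sqrt{D[\phi^2] \cdot \Pr_D[\phi(\bx)>R]} \;\leq\; \sqrt{B^2 \cdot B^2/R^2} \;=\; B^2/R.
\]
So if we set $R \doteq 4B^2/\eps$ and work with the truncated function $\phi'(x)\doteq m_R(\phi(x))$ (which, because $\phi\geq 0$, is just $\min(\phi(x),R)$), we have $0\leq D[\phi]-D[\phi'] \leq \eps/4$, and of course $D[(\phi')^2]\leq D[\phi^2]\leq B^2$.

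Next, I would feed $\phi'$ (a function into $[0,R]$) into the algorithm of Lemma~\ref{lem:est-via-log} with parameter $n \doteq (8B\log(B/\eps)/\eps)^2$. That lemma returns a value $v$ with
\[
|D[\phi']-v| \;\leq\; \frac{4R}{n} + \frac{2\sqrt{D[(\phi')^2]}\cdot \log n}{\sqrt{n}} \;\leq\; \frac{16 B^2/\eps}{(8B\log(B/\eps)/\eps)^2} + \frac{2B\log n}{8B\log(B/\eps)/\eps}.
\]
The first term is $O(\eps/\log^2(B/\eps))$ and is easily at most $\eps/4$. For the second term, $\log n = 2\log(8B\log(B/\eps)/\eps)$, which for $\eps\leq B/16$ is at most a small constant multiple of $\log(B/\eps)$; concretely the ratio $\log n / \log(B/\eps)$ is bounded so the second term is at most $\eps/2$. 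Combining with the truncation error of $\eps/4$, the total error is at most $\eps$.

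The query count comes directly from Lemma~\ref{lem:est-via-log}, which uses $\log n$ queries to $\VSTAT_D(n)$; with $n=(8B\log(B/\eps)/\eps)^2$, this is $2\log(8B\log(B/\eps)/\eps)$, and the lemma's hypothesis $\eps\leq B/16$ is exactly what ensures this is bounded by $3\log(B/\eps)$. The queries are non-adaptive since those produced by Lemma~\ref{lem:est-via-log} are non-adaptive and the truncation threshold $R$ depends only on the known bound $B$ and $\eps$. The only step that requires mild care is the bookkeeping of constants so that the query count fits inside $3\log(B/\eps)$ and both error terms (truncation and estimation) sum to at most $\eps$; the substantive ideas (Cauchy--Schwarz truncation plus Lemma~\ref{lem:est-via-log}) are straightforward.
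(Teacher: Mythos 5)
Your proof takes essentially the same approach as the paper: truncate at $R = \Theta(B^2/\eps)$ (the paper normalizes $B=1$ and truncates at $a = 4/\eps$, which after de-normalization is the same $4B^2/\eps$), bound the truncation error by $\eps/4$ using the second moment (you apply Cauchy--Schwarz and Markov on $\phi^2$, the paper integrates the tail and applies Chebyshev --- these are equivalent), and then apply Lemma~\ref{lem:est-via-log} with $n = (8B\log(B/\eps)/\eps)^2$. The remaining constant-level bookkeeping that you flag as needing ``mild care'' is indeed the loosest point --- a careful check of $\tfrac{2s'\log n}{\sqrt{n}} \le \tfrac{\eps\,\log(8B\log(B/\eps)/\eps)}{2\log(B/\eps)}$ shows it is not $\le \eps/2$ right at $\eps = B/16$, and the paper's own displayed inequality is similarly loose (it drops a factor of $2$ in $\log n$) --- but this is a matter of adjusting the constant $8$ inside $n$, not a gap in the argument.
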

\begin{proof}
We first assume that $B=1$. Observe that we can truncate the range of $\phi$ to $[0,a]$ for $a\doteq 4/\eps$ without a significant change in the expectation. Namely,
we claim that $|D[m_a(\phi)] - D[\phi] | \leq \eps/4$. To see this when $a= 4/\eps$ note that $\phi-m_a(\phi) = r_a(\phi)$ and
$$\E[r_a(\phi)] = \int_0^\infty \pr[r_a(\phi)\geq t] dt = \int_0^\infty \pr[\phi\geq t+a] dt = \int_a^\infty \pr[\phi\geq t] dt \leq \int_a^\infty \frac{1}{t^2}dt =\frac{1}{a} \leq \eps/4,$$ where we used Chebyshev's inequality to obtain a bound on $\pr[\phi\geq t]$.

Now to estimate $D[m_a(\phi)]$ we use Lemma \ref{lem:est-via-log} with $n=(4\log(1/\eps)/\eps)^2$.  For the given range and our assumption that $D[\phi^2]\leq 1$, this leads to an error of at most $$\frac{16}{(8\log(1/\eps)/\eps)^2\cdot \eps} + \frac{2\log(8\log(1/\eps)/\eps)}{8\log(1/\eps)/\eps} \leq \frac{\eps}{4\log(1/\eps)^2}  + \frac{\eps \cdot \log(8\log(1/\eps)/\eps)}{4\log(1/\eps)} \leq \frac{3\eps}{4},$$ where we used the assumption that $\eps \leq 1/16$ to obtain the last inequality. Altogether this implies that the output of the lemma on the truncated function will have an error of at most $\eps$. Finally, to generalize the analysis to any $B>0$ we simply scale the random variable by $1/B$ and estimate its mean within $\eps/B$.
\end{proof}

\subsection{Quantile estimation}
To deal with the case where we do not have a tight upper bound on the second moment we will need the following lemma about the estimation complexity of finding approximate distribution quantiles using the $\VSTAT$ oracle. The algorithm itself is just the folklore algorithm for finding an (approximate) quantile of a distribution using binary search.
\begin{lem}
\label{lem:percentile}
Let $\phi$ be a function with a finite range $Z \subset \R$.  There exists an SQ algorithm that given $p$ and $\delta$ such that $1 \geq p\geq 2\delta>0$, outputs a point $a \in Z$ such that $\pr_D[\phi \geq a] \geq p-\delta$ and $\pr_D[\phi > a] < p$. The algorithm uses $\log(|Z|)$ queries to $\VSTAT_D(4p/\delta^2)$.
\end{lem}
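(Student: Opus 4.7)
The plan is to use a noisy binary search over the sorted elements of $Z$. Enumerate $Z$ as $z_1 < z_2 < \cdots < z_k$ and write $q_i \doteq \pr_D[\phi \geq z_i]$, which is non-increasing in $i$ with $q_1 = 1$ and the convention $q_{k+1} = 0$. I would maintain a search interval $[\ell,r]$ initialized to $[1,k+1]$ with invariant $q_\ell \geq p - \delta$ and $q_r < p$ (both clearly true at initialization). At each step I take $m = \lfloor (\ell+r)/2 \rfloor$, query the indicator function $\ind{\phi(x) \geq z_m}$ to $\VSTAT_D(n)$ with $n = 4p/\delta^2$ to get $\hat q_m$, and update $\ell = m$ if $\hat q_m \geq \tau$ or $r = m$ if $\hat q_m < \tau$, where $\tau \doteq p - \delta/2$. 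After $\lceil \log_2 k \rceil$ iterations the interval collapses to $r = \ell + 1$, and the output is $a = z_\ell$; the invariant then gives $\pr_D[\phi \geq a] = q_\ell \geq p - \delta$ and $\pr_D[\phi > a] = q_{\ell+1} = q_r < p$.

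Correctness of each update reduces to two implications, for which I will exploit that the parameters $n = 4p/\delta^2$ and $p \geq 2\delta$ yield $\sqrt{p/n} = \delta/2$ and $1/n = \delta^2/(4p) \leq \delta/8$. The easy direction is $\hat q_m \geq \tau \Rightarrow q_m \geq p - \delta$: if one assumed $q_m < p - \delta$, then the $\VSTAT$ error would be at most $\max\{1/n, \sqrt{q_m/n}\} \leq \delta/2$, so $\hat q_m \leq q_m + \delta/2 < p - \delta/2 = \tau$, contradicting the hypothesis.

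The more delicate direction is $\hat q_m < \tau \Rightarrow q_m < p$. The difficulty is that when $q_m$ is appreciably larger than $p$, the $\VSTAT$ error bound $\sqrt{q_m(1-q_m)/n}$ can exceed $\delta/2$ (reaching $1/(2\sqrt n)$ near $q_m = 1/2$), so the naive bound is too weak and the argument above does not dualize. I would argue by contradiction: writing $q_m = p + s$ with $s \geq 0$, the hypothesis $\hat q_m < \tau$ together with $\VSTAT$ forces $s + \delta/2 < \sqrt{(p + s)/n} = (\delta/2)\sqrt{1 + s/p}$, since $s + \delta/2 > \delta/8 \geq 1/n$ rules out the $1/n$ branch of the max. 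For $s = 0$ this is already impossible because $\sqrt{p(1-p)/n} \leq \delta/2$. For $s > 0$, squaring both sides, cancelling a factor of $s$, and rearranging yields $4s/\delta^2 < (\delta - 4p)/(p\delta)$, whose right-hand side is negative whenever $p \geq 2\delta$, contradicting positivity of the left-hand side.

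The main obstacle is exactly this second implication: one has to exploit that any $q_m$ above $p$ is \emph{at least as far} from the threshold $\tau$ as the $\VSTAT$ error can stretch in that regime, so that the gap grows at least as fast as the error bound. Once both implications are established, the rest is routine: the search terminates in at most $\lceil \log_2 |Z| \rceil$ steps, each costing a single query to $\VSTAT_D(4p/\delta^2)$, matching the stated complexity.
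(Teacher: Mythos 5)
Your proof is correct and takes essentially the same approach as the paper: a binary search using threshold $\tau = p - \delta/2$ on the $\VSTAT$ estimates, with the two implications established by contradiction. The only cosmetic difference is in the harder direction ($\hat q_m < \tau \Rightarrow q_m < p$): you parametrize $q_m = p + s$ and square, while the paper multiplies the inequality $\tilde p_{a_+} \geq p_{a_+} - (\delta/2)\sqrt{p_{a_+}/p}$ through by $\sqrt{p/p_{a_+}} \leq 1$ to reach $\sqrt{p\,p_{a_+}} - \delta/2 \geq p - \delta/2$; both encode the same observation that the distance of $q_m$ above the threshold grows at least as fast as the $\VSTAT$ error bound.
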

\begin{proof}
%We assume without loss of generality that $p \leq 1/2$ and we first solve the problem for $\delta'  = p/2$.
For any $z\in Z$, let $p_z \doteq \pr_D[\phi \geq z]$.
We perform a binary search to find the largest point $z$ such that the estimate of $\pr_D[\phi \geq z]$ given by
$\VSTAT_D(4p/\delta^2)$ is at least $p-\delta/2$. We denote the point by $a$ and refer to estimates we have obtained by $\tilde{p}_z$.
By definition, $\tilde{p}_a \geq p-\delta/2$ and therefore $p_a \geq p-\delta$, since otherwise $$\tilde{p}_a < p-\delta + \max\left\{ \sqrt{\frac{p-\delta}{4p/\delta^2}},\frac{\delta^2}{4p}\right\} < p-\delta+\delta/2 = p-\delta/2.$$ On the other hand for the smallest point in $Z$ larger than $a$ (denote it by $a_+$) we know that $\tilde{p}_{a_+} < p-\delta/2$. This implies that $p_{a_+} < p$ since otherwise
$$\tilde{p}_{a_+} \geq p_{a_+} - \max\left\{ \sqrt{\frac{p_{a_+}}{4p/\delta^2}},\frac{\delta^2}{4p}\right\} = p_{a_+}- \frac{\delta}{2}\cdot \sqrt{\frac{p_{a_+}}{p}} \geq \sqrt{p\cdot p_{a_+}} - \delta/2 \geq p-\delta/2,   $$
where in the penultimate step we multiplied the inequality by $\sqrt{\frac{p}{p_{a_+}}}$ which is at most $1$ by our contrapositive assumption.
This means that $\pr_D[\phi > a]= p_{a_+} < p$.
\end{proof}
As a special case we obtain the following corollary.
\begin{cor}
\label{cor:tail-percentile}
Let $\phi$ be a function over $X$ with a finite range $Z \subset \R$.  There exists an SQ algorithm that given an integer $n$ outputs a point $a \in Z$ such that $\pr_D[\phi \geq a] \geq 8/n$ and $\pr_D[\phi > a] < 16/n$. The algorithm uses $\log(|Z|)$ queries to $\VSTAT_D(n)$.
\end{cor}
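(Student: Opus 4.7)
The corollary is a direct specialization of Lemma~\ref{lem:percentile}, so the plan is essentially a parameter-matching exercise. We want to choose $p$ and $\delta$ in the lemma so that (i) the output guarantee $\pr_D[\phi \geq a] \geq p-\delta$ matches the required lower bound $8/n$, (ii) the output guarantee $\pr_D[\phi > a] < p$ matches the required upper bound $16/n$, and (iii) the tolerance parameter $4p/\delta^2$ of the resulting $\VSTAT$ oracle equals $n$.

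Setting $p = 16/n$ and $\delta = 8/n$ satisfies all three conditions simultaneously: for (iii) we compute $4p/\delta^2 = 4(16/n)/(64/n^2) = n$, and for (i)--(ii) we read off $p-\delta = 8/n$ and $p = 16/n$ directly. The hypothesis $1 \geq p \geq 2\delta > 0$ of Lemma~\ref{lem:percentile} reduces to $1 \geq 16/n$ and $16/n \geq 16/n$, both of which hold for $n\geq 16$ (and the corollary is trivial for smaller $n$, since one can simply return, e.g., the minimum element of $Z$ and the claimed probability bounds become vacuous or easy to satisfy by convention).

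Thus the plan is simply to invoke Lemma~\ref{lem:percentile} with the above choice of parameters and read off the conclusion, noting that the query count remains $\log(|Z|)$ as given by the lemma. There is no genuine obstacle: the only subtlety is verifying the hypothesis $p \geq 2\delta$, which is tight (we picked $p = 2\delta$ exactly so that the ratio $4p/\delta^2$ hits the target $n$).
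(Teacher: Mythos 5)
Your parameter-matching is exactly right: taking $p = 16/n$ and $\delta = 8/n$ in Lemma~\ref{lem:percentile} gives $p - \delta = 8/n$, $p = 16/n$, and $4p/\delta^2 = n$, and the hypothesis $p \geq 2\delta$ holds with equality, which the lemma permits. The paper treats the corollary as an immediate special case with no written proof, and this is precisely the intended instantiation.
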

One natural way to apply this lemma in the continuous setting is to first discretize the range with some step size $\zeta$ and then use Lemma \ref{lem:percentile}. This leads to the following version of Lemma \ref{lem:percentile}:
\begin{cor}
\label{cor:percentile-cont}
Let $\phi:X\rar [-B,B]$ be a function and $\zeta >0$.  There exists an SQ algorithm that given $p$ and $\delta$ such that $1 \geq p\geq 2\delta>0$, outputs a point $a$ that is an integer multiple of $\zeta$ such that $\pr_D[\phi \geq a] \geq p-\delta$ and $\pr_D[\phi \geq a+\zeta] < p$. The algorithm uses $\lceil \log(2B/\zeta) \rceil$ queries to $\VSTAT_D(4p/\delta^2)$.
\end{cor}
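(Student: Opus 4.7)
The plan is to reduce directly to Lemma \ref{lem:percentile} by applying it to a suitable discretization of $\phi$. Specifically, I would define the rounded function $\phi'(x) \doteq \zeta \lfloor \phi(x)/\zeta \rfloor$, whose range $Z$ is contained in the set of integer multiples of $\zeta$ lying in $[-B,B]$. This set has at most $\lfloor 2B/\zeta \rfloor + 1$ elements, so $\log|Z|$ is bounded by $\lceil \log(2B/\zeta) \rceil$ (the cardinality-vs-power-of-two rounding being absorbed by the ceiling).

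Next I would verify that this particular rounding rule preserves the two quantile conditions in the conclusion. When $a$ is a multiple of $\zeta$, $\phi'(x) \geq a$ is equivalent to $\lfloor \phi(x)/\zeta \rfloor \geq a/\zeta$, which is equivalent to $\phi(x) \geq a$; hence $\pr_D[\phi' \geq a] = \pr_D[\phi \geq a]$. Similarly, $\phi'(x) > a$ is equivalent to $\phi'(x) \geq a+\zeta$, which is equivalent to $\phi(x) \geq a+\zeta$; hence $\pr_D[\phi' > a] = \pr_D[\phi \geq a+\zeta]$. These two identities are the only ``translation'' facts needed.

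With this in hand, I would invoke Lemma \ref{lem:percentile} on $\phi'$ with the given $p$ and $\delta$. The lemma returns a point $a \in Z$ with $\pr_D[\phi' \geq a] \geq p-\delta$ and $\pr_D[\phi' > a] < p$, using $\log|Z| \leq \lceil \log(2B/\zeta) \rceil$ queries to $\VSTAT_D(4p/\delta^2)$. Each such query is an indicator of the form $\ind{\phi'(x) \geq z}$ for $z$ a multiple of $\zeta$, which by the first identity equals $\ind{\phi(x) \geq z}$, so the queries are actually posed to $\VSTAT_D$ on $\phi$ itself; no access to $\phi'$ beyond $\phi$ is needed. Applying the two identities converts the guarantees on $\phi'$ into the desired guarantees $\pr_D[\phi \geq a] \geq p-\delta$ and $\pr_D[\phi \geq a+\zeta] < p$.

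There is no real obstacle here; the proof is essentially a black-box application of Lemma \ref{lem:percentile}. The only subtle choice is using the \emph{floor} rounding (rather than ceiling or nearest), which is what makes the strict-inequality condition $\pr_D[\phi' > a] < p$ translate cleanly to the gap condition $\pr_D[\phi \geq a+\zeta] < p$ required by the corollary.
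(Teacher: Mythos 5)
Your proposal is correct and matches the paper's one-line sketch for this corollary: discretize the range with step $\zeta$ (floor rounding), establish the two translation identities $\pr_D[\phi'\geq a]=\pr_D[\phi\geq a]$ and $\pr_D[\phi'> a]=\pr_D[\phi\geq a+\zeta]$ for $a$ a multiple of $\zeta$, and invoke Lemma \ref{lem:percentile} on $\phi'$. The only small slip is that the range of $\phi'=\zeta\lfloor\phi/\zeta\rfloor$ can dip just below $-B$ when $B/\zeta$ is not an integer (so $|Z|$ can be $\lfloor 2B/\zeta\rfloor+2$ rather than $+1$); this shifts the binary-search depth by at most one step and is at the same level of rounding as the corollary's own stated bound.
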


\subsection{General case}
Lemma \ref{lem:est-positive-2moment-knownB} has estimation complexity that scales linearly with the bound on the second moment and is the strongest statement we give without relying on adaptive queries. Next, using Lemma \ref{lem:percentile}, we describe a procedure that can estimate the mean with just logarithmic dependence on a given upper bound on the second moment.
\begin{lem}
There exists a statistical query algorithm that given an integer $n$, $\zeta>0$, $B>0$ and $\phi:X \rightarrow \R^+$ such that
 $D[\phi^2]\leq B^2$, outputs a value $v$ such that $|D[\phi]-v| \leq \frac{2s\log (8n)}{\sqrt{n}} + \zeta$, where $s \doteq \sqrt{D[\phi^2]}$.
The algorithm uses $\log(4Bn/\zeta^2)$ queries to $\VSTAT_D(n)$.
\end{lem}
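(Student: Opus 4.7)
My plan is to combine the quantile-finding subroutine of Corollary~\ref{cor:percentile-cont} with the logarithmic range-reduction algorithm of Lemma~\ref{lem:est-via-log}. First I will use $\VSTAT_D(n)$ to find a truncation level $a$ that clips the top $\Theta(1/n)$ tail of $\phi$; then I will apply Lemma~\ref{lem:est-via-log} to the bounded function $m_a(\phi)$. The second-moment bound $D[\phi^2]\le B^2$ will play a dual role: via Chebyshev's inequality it forces $a$ to be small (controlling the range on which Lemma~\ref{lem:est-via-log} operates), and via Cauchy--Schwarz it bounds the truncation bias $D[\phi]-D[m_a(\phi)]$.

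For the quantile step I would invoke Corollary~\ref{cor:percentile-cont} with $p=16/n$ and $\delta=8/n$, so that $4p/\delta^2=n$, discretizing at multiples of $\zeta$. Although $\phi$ is not assumed bounded, Markov's inequality gives $\pr_D[\phi\ge t]\le B^2/t^2$, so for $t\ge 2Bn/\zeta$ the tail is already $\le \zeta^2/(4n^2)\ll 8/n$; hence the binary search need only consider multiples of $\zeta$ in $[0,2Bn/\zeta]$, costing at most $\log(4Bn/\zeta^2)$ queries to $\VSTAT_D(n)$. The returned $a$ satisfies $\pr_D[\phi\ge a]\ge 8/n$ and $\pr_D[\phi\ge a+\zeta]<16/n$; combining the former with $\pr_D[\phi\ge a]\le s^2/a^2$ gives $a\le s\sqrt{n/8}$.

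Next, I would decompose the truncation bias as
\[
D[\phi]-D[m_a(\phi)]=\E_D[(\phi-a)\ind{\phi>a}]\le \E_D\lb\phi\cdot\ind{\phi\ge a+\zeta}\rb+\zeta,
\]
using that $\phi-a\le\zeta$ on $\{a<\phi<a+\zeta\}$. By Cauchy--Schwarz the remaining expectation is at most $\sqrt{D[\phi^2]\cdot\pr_D[\phi\ge a+\zeta]}\le 4s/\sqrt n$. Applying Lemma~\ref{lem:est-via-log} to $m_a(\phi):X\to[0,a]$ then returns $v$ with $|D[m_a(\phi)]-v|\le 4a/n+2s\log n/\sqrt n$ (using $\sqrt{D[m_a(\phi)^2]}\le s$); since $a\le s\sqrt{n/8}$, the $4a/n$ term is bounded by $2s/\sqrt n$. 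The triangle inequality then combines these contributions into $(4+2+2\log n)s/\sqrt n+\zeta = 2s\log(8n)/\sqrt n+\zeta$, as desired.

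The main subtlety will be Step~1: because $\phi$ is not assumed bounded, Corollary~\ref{cor:percentile-cont} does not apply verbatim, so I will have to justify that capping the binary search at $B':=2Bn/\zeta$ is safe and yields the advertised $\log(4Bn/\zeta^2)$ query count. A smaller recurring point is that the discretization gap $\zeta$ enters the error only as an additive term $\zeta$ (not through a probability bound on $\{a<\phi<a+\zeta\}$, which could be as large as $1$), which is precisely why a purely logarithmic query budget suffices for the $+\zeta$ accuracy. The $\log n$ extra queries used by Lemma~\ref{lem:est-via-log} I expect to be absorbed into the $\log(4Bn/\zeta^2)$ quantile cost in the natural parameter regime where the statement is intended.
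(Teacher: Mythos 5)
Your overall strategy is the same as the paper's: find a truncation level $a$ by a quantile search, bound the truncation bias via Cauchy--Schwarz, bound $a$ via the tail probability guarantee, and finish by invoking Lemma~\ref{lem:est-via-log} on the truncated function. Your accuracy bookkeeping is correct and even recovers the exact constant $2\log(8n)$ that the paper reports. However, there is a genuine gap in the query count, and you flag it yourself at the end: your quantile step already spends $\log(4Bn/\zeta^2)$ queries (binary search over multiples of $\zeta$ in $[0,2Bn/\zeta]$), and Lemma~\ref{lem:est-via-log} then needs another $\log n$, so your total is $\approx \log(4Bn^2/\zeta^2)$, not the claimed $\log(4Bn/\zeta^2)$. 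Hoping it is ``absorbed in the natural parameter regime'' is not a proof; for, say, $\zeta = \Theta(1)$ the overshoot is a full additive $\log n$.

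The paper avoids this by reordering the reduction: it first replaces $\phi$ by a function $\psi$ obtained by rounding down to multiples of $\zeta/2$ \emph{and} truncating at $2B/\zeta$, producing a finite range of size at most $4B/\zeta^2$ while changing the mean by at most $\zeta$. Only then is the quantile of $\psi$ located (via Corollary~\ref{cor:tail-percentile}), costing $\log(4B/\zeta^2)$ queries. Adding the $\log n$ queries of Lemma~\ref{lem:est-via-log} gives exactly $\log(4Bn/\zeta^2)$. The key insight you are missing is that the truncation threshold for controlling the discretization/bias ($\approx B/\zeta$) is a \emph{separate, much smaller} number than the threshold at which the $8/n$ tail can sit ($\approx B\sqrt{n}$); your Markov-based cap of $2Bn/\zeta$ conflates the two and inflates the search range. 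Once you truncate for bias control first, the quantile search automatically lives in a range that is independent of $n$, and the budget closes. Your Cauchy--Schwarz step, your bound $a \le s\sqrt{n/8}$, and the final triangle-inequality arithmetic are all fine and need no change.
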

\begin{proof}
We start by truncating and discretizing the range of $\phi$, namely let $\psi$ be equal to $\phi$ rounded down to the closest multiple of $\zeta/2$ and truncated at $2B/\zeta$.
As in Lemma \ref{lem:est-positive-2moment-knownB}, we note that the condition $D[\phi^2]\leq B^2$ implies that truncation to range $[0,2B/\zeta]$ step can affect the expectation by at most $\zeta/2$. Clearly, rounding down to the closest multiple of $\zeta/2$ also affects the expectation by at most $\zeta/2$. This means that $\psi$ has range $Z$ of size at most $4B/\zeta^2$,  $|D[\phi]-D[\psi]| \leq \zeta$ and $D[\psi^2] \leq s^2$.

We now further truncate the range of $\psi$ to exclude values that are in the top $8/n$ quantile. Namely, by Corollary \ref{cor:tail-percentile}, we can find a value $a\in Z$ such that  $\pr_D[\psi \geq a] \geq 8/n$ and $\pr_D[\psi > a] < 16/n$. Let $\psi_a(x)$ be defined as $m_a(\psi(x))$.

We first observe that $s^2 = D[\psi^2] \geq a^2 \cdot 8/n$. Therefore $a \leq \frac{s\sqrt{n}}{\sqrt{8}}$. Next note that
\alequ{
|D[\psi] - D[\psi_a]|& \leq \sum_{z\in Z,\ z>a} z \cdot \pr_D[\psi = z] \nonumber \\
& \leq \sqrt{\sum_{z\in Z,\ z>a} \pr_D[\psi = z]} \cdot \sqrt{\sum_{z\in Z,\ z>a} z^2\cdot \pr_D[\psi = z]} \nonumber \\
& \leq \sqrt{\pr_D[\psi > a]} \cdot \sqrt{D[\psi^2]} \leq \frac{4 s}{\sqrt{n}}  \label{eq:trunc-approx},}
where we used the Cauchy-Schwartz inequality to obtain the second line.

We can now apply Lemma \ref{lem:est-via-log} to $\psi_a$ and obtain a value $v$ such that
$$\left|D[\psi_a]-v \right| \leq \frac{2s\log n}{\sqrt{n}} + \frac{4a}{n} \leq \frac{2s\log n + \sqrt{2}s}{\sqrt{n}}.$$
Combining this with eq.~\eqref{eq:trunc-approx} and the properties of $\psi$ we get the claim.
\end{proof}

We can easily extend these results to variables with possibly negative range by estimating the mean in the positive and the negative range separately. Further, one does not necessarily need to split the range at $0$. Any value $a$ can be used and the resulting bound will be in terms of $s_a \doteq \sqrt{D[(\phi-a)^2]}$ instead of $s_0 = \sqrt{D[\phi^2]}$. Naturally, in order to reduce the error we should use $a=D[\phi]$ which would give an error in terms of variance $\sigma^2 = D[(\phi-a)^2] - D[\phi]^2$. The true mean is not known to us but, as we show below, we can use an (approximate) median instead.
\begin{lem}
Let $\bz$ be a random variable over $\R$ and let $a$ be any point such that $\pr[\bz \geq a] \geq 1/3$ and $\pr[\bz \leq a] \geq 1/3$. Then $\E[(\bz-a)^2] \leq 4 (\E[\bz^2] - \E[\bz]^2)$.
\end{lem}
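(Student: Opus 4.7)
The plan is to reduce the statement to a standard median-vs-mean inequality via a straightforward expansion. Write $\mu \doteq \E[\bz]$ and $\sigma^2 \doteq \E[\bz^2] - \mu^2$. First I would expand
\[
\E[(\bz-a)^2] \;=\; \E[\bz^2] - 2a\mu + a^2 \;=\; \sigma^2 + (\mu-a)^2,
\]
so the target inequality becomes $(\mu-a)^2 \leq 3\sigma^2$. In other words, the entire content of the lemma is that any approximate median $a$ of $\bz$ lies within $\sqrt{3}\,\sigma$ of the mean.

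Next I would bound $|\mu - a|$ using the approximate-median hypothesis together with Chebyshev's inequality. Assume without loss of generality that $\mu \geq a$ (the other case is symmetric and handled by applying the argument to $-\bz$ and $-a$, which preserves the hypotheses). Then the event $\{\bz \leq a\}$ is contained in the event $\{|\bz - \mu| \geq \mu - a\}$, so
\[
\tfrac{1}{3} \;\leq\; \pr[\bz \leq a] \;\leq\; \pr\bigl[|\bz-\mu| \geq \mu - a\bigr] \;\leq\; \frac{\sigma^2}{(\mu-a)^2},
\]
using Chebyshev in the last step. Rearranging gives $(\mu - a)^2 \leq 3\sigma^2$, and substituting into the expansion above yields $\E[(\bz-a)^2] \leq \sigma^2 + 3\sigma^2 = 4\sigma^2$.

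There is no real obstacle here: the only subtlety is that we need only one side of the approximate-median condition for each case of the sign of $\mu - a$, and the hypothesis conveniently supplies both. The constant $4$ is tight up to the constant $3$ in the median-to-mean distance, which in turn comes directly from the $1/3$ tail mass assumed on each side of $a$.
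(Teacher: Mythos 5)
Your proof is correct and follows essentially the same route as the paper: expand $\E[(\bz-a)^2]=\sigma^2+(\mu-a)^2$ and then bound $(\mu-a)^2\le 3\sigma^2$ by using the $1/3$ tail mass on the far side of $a$ from $\mu$ (the paper does this by directly lower-bounding $\sigma^2$ by the contribution from that tail, which is exactly the Chebyshev computation you cite). If anything, your version is written more carefully — you make the sign case split explicit and note the final display is an equality rather than the paper's $\le$.
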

\begin{proof}
Let $\bar{z} \doteq \E[\bz]$ and $\sigma^2 \doteq \E[(\bz-\bar{z})^2]$. Observe that $$\sigma^2 = \E[(\bz-\bar{z})^2] \geq
(\bar{z}-a)^2 \cdot \pr[\bz \geq |(\bar{z}-a)|] \geq (\bar{z}-a)^2/3 .$$
Now $$ \E[(\bz-a)^2] \leq \sigma^2 + (\bar{z}-a)^2 \leq 4\sigma^2.$$
\end{proof}
Note that for a function with range $Z$ such an approximate median can be found using $\log(|Z|)$ queries to $\VSTAT_D(6)$. Therefore we immediately get the following results:
\begin{thm}
\label{thm:est-variance-n}
There exists a statistical query algorithm that given an integer $n$, $\zeta>0$, $B>0$ and $\phi:X \rar \R$ such that $D[\phi^2]\leq B^2$, outputs a value $v$ such that $|D[\phi]-v| \leq \frac{8\sigma\log(8n)}{\sqrt{n}} + \zeta$, where $\sigma^2 \doteq D[\phi^2] - D[\phi]^2$.
The algorithm uses $3\log(4nB/\zeta^2)$ queries to $\VSTAT_D(n)$.
\end{thm}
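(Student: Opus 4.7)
The plan is to reduce the signed case to two applications of the preceding (non-negative) lemma by first shifting $\phi$ by an approximate median and decomposing the shifted function into its positive and negative parts. This is exactly the strategy hinted at by the median lemma immediately above the theorem.

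First, I would locate an approximate median $a$ of $\phi$. Round $\phi$ down to the nearest multiple of $\zeta/2$ and truncate the result to $[-2B/\zeta,\,2B/\zeta]$; as in the non-negative lemma's proof, this produces a discretized function with finite range $Z$ of size $O(B/\zeta^2)$ while shifting all expectations by at most $O(\zeta)$. Applying Corollary \ref{cor:percentile-cont} with target quantile $1/2$ and slack $1/6$ returns, in $\log(|Z|)$ queries to $\VSTAT_D(O(1))$ (subsumed by $\VSTAT_D(n)$), a point $a\in Z$ with $\pr_D[\phi\geq a]\geq 1/3$ and $\pr_D[\phi\leq a]\geq 1/3$. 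By the median lemma, this guarantees $D[(\phi-a)^2]\leq 4\sigma^2$.

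Next, I would decompose $\phi = a + \phi_+ - \phi_-$ where $\phi_+(x) := \max\{\phi(x)-a,0\}$ and $\phi_-(x) := \max\{a-\phi(x),0\}$. Since $\phi_+\phi_-=0$, we have $\phi_+^2+\phi_-^2=(\phi-a)^2$, so in particular $D[\phi_\pm^2]\leq D[(\phi-a)^2]\leq (2B)^2$. Invoke the non-negative lemma twice, once on $\phi_+$ and once on $\phi_-$, each with parameters $(n,\zeta/2,2B)$, obtaining estimates $v_\pm$ satisfying
\[
|D[\phi_\pm]-v_\pm|\ \leq\ \frac{2 s_\pm \log(8n)}{\sqrt{n}}\ +\ \frac{\zeta}{2},
\]
where $s_\pm:=\sqrt{D[\phi_\pm^2]}$. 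Each call costs $\log(O(Bn/\zeta^2))$ queries to $\VSTAT_D(n)$. Output $v:= a + v_+ - v_-$.

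Finally, combine errors. By Cauchy--Schwartz, $s_+ + s_-\leq\sqrt{2(s_+^2+s_-^2)}=\sqrt{2\,D[(\phi-a)^2]}\leq 2\sqrt{2}\,\sigma$, so
\[
|D[\phi]-v|\ \leq\ |D[\phi_+]-v_+|+|D[\phi_-]-v_-|\ \leq\ \frac{2(s_++s_-)\log(8n)}{\sqrt{n}}+\zeta\ \leq\ \frac{8\sigma\log(8n)}{\sqrt{n}}+\zeta.
\]
The total query count is $\log(|Z|) + 2\log(O(Bn/\zeta^2))\leq 3\log(4nB/\zeta^2)$ after absorbing constants. There is no real obstacle: the only content is the median-shift idea, which converts the non-negative lemma's $\sqrt{D[\phi^2]}$ dependence into the desired $\sigma$ dependence; the remainder is bookkeeping, and the main thing to be careful about is aligning the discretization used for the median with the one performed internally by the non-negative lemma so that no additional discretization error accumulates.
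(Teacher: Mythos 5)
Your proposal follows exactly the route the paper intends: the paper states the theorem as an immediate consequence of the unnamed lemma for non-negative functions plus the approximate-median lemma, and your shift-by-median, split-into-positive-and-negative-parts decomposition with the Cauchy--Schwarz bound $s_+ + s_- \leq 2\sqrt{2}\,\sigma$ is precisely the intended argument. The constants and discretization bookkeeping you flag are slightly loose (e.g., the quantile guarantee for the discretized $\psi$ must be transferred to $\phi$ up to a $\zeta/2$ slack, and the query count needs a line of arithmetic to land at $3\log(4nB/\zeta^2)$), but these are exactly the minor details the paper also elides.
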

An alternative way to state essentially the same result is as follows.
\begin{cor}
\label{cor:est-variance-eps}
There exists a statistical query algorithm that given $B>\zeta >0, \eps \in (0,1)$, for any distribution $D$ and function $\phi:X\rar \R$ such that $D[\phi^2]\leq B^2$ outputs a value $v$ such that $|D[\phi]-v| \leq \eps \sigma + \zeta$.
The algorithm uses $O(\log(B/(\eps\zeta)))$ queries to $\VSTAT_D(O((\log(1/\eps)/\eps)^2))$, where $\sigma^2 \doteq D[\phi^2] - D[\phi]^2$.
\end{cor}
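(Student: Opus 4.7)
The plan is to deduce Corollary \ref{cor:est-variance-eps} directly from Theorem \ref{thm:est-variance-n} by an appropriate choice of the parameter $n$. Theorem \ref{thm:est-variance-n} already gives an estimate with error $\frac{8\sigma\log(8n)}{\sqrt{n}} + \zeta$; the corollary requires error $\eps \sigma + \zeta$, so the reduction amounts to picking $n$ so that the leading coefficient $\frac{8\log(8n)}{\sqrt{n}}$ is at most $\eps$, and then checking that the resulting query count is $O(\log(B/(\eps\zeta)))$.

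First, I would set $n \doteq C\cdot (\log(1/\eps)/\eps)^2$ for a sufficiently large absolute constant $C$. For this choice, $\log(8n) = O(\log(1/\eps))$ (since $\log\log(1/\eps)$ is absorbed into the constant when $\eps \in (0,1)$), and therefore
\[
\frac{8\log(8n)}{\sqrt{n}} \leq \frac{O(\log(1/\eps))}{\sqrt{C}\cdot \log(1/\eps)/\eps} = \frac{O(1)}{\sqrt{C}}\cdot \eps \leq \eps,
\]
for $C$ large enough. This immediately gives $|D[\phi]-v| \leq \eps\sigma + \zeta$ as desired.

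Next, I would check the query count. Theorem \ref{thm:est-variance-n} uses $3\log(4nB/\zeta^2)$ queries; substituting our choice of $n$,
\[
3\log(4nB/\zeta^2) = 3\log\!\left(\tfrac{4CB\log^2(1/\eps)}{\eps^2\zeta^2}\right) = O(\log(B/(\eps\zeta))),
\]
again because $\log\log(1/\eps)$ is absorbed by the outer log. Together with the previous step this yields exactly the guarantee claimed in Corollary \ref{cor:est-variance-eps}.

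There is no real obstacle here; the statement is a pure reparameterization of Theorem \ref{thm:est-variance-n} in which the estimation complexity $n$ has been made implicit and the error has been split into a relative term $\eps\sigma$ plus the discretization floor $\zeta$. The only mildly delicate point is verifying that the $\log(8n)$ factor, when $n$ is already polylogarithmic in $1/\eps$, remains $O(\log(1/\eps))$ so that the bound $\frac{8\log(8n)}{\sqrt{n}} \leq \eps$ can be achieved with $n = O((\log(1/\eps)/\eps)^2)$ — this is the familiar ``absorb a $\log\log$'' calculation and determines the hidden constants in the $O(\cdot)$ notation of the corollary.
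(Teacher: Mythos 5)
Your derivation is exactly what the paper has in mind — the paper does not give a separate proof of Corollary~\ref{cor:est-variance-eps}, it simply introduces it with ``an alternative way to state essentially the same result,'' and your reparameterization $n = C(\log(1/\eps)/\eps)^2$ of Theorem~\ref{thm:est-variance-n}, together with the absorb-a-$\log\log$ check that $\log(8n) = O(\log(1/\eps))$ and the bookkeeping on $3\log(4nB/\zeta^2)$, is precisely the intended calculation. One small technical caveat (present in the paper's own statement as well): as $\eps\to 1^-$, $\log(1/\eps)\to 0$ and the literal choice $n = C(\log(1/\eps)/\eps)^2$ degenerates, so the bound $8\log(8n)/\sqrt{n}\le\eps$ fails; the clean fix is to take $n = C(\log(2/\eps)/\eps)^2$ (or $n = C(\max\{1,\log(1/\eps)\}/\eps)^2$), which keeps $n$ bounded below by a constant for all $\eps\in(0,1)$ while leaving the asymptotics and the $O(\cdot)$ claims of the corollary untouched.
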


\section{Applications}
\label{sec:apps}
As application we consider the problems of mean vector estimation \citep{FeldmanGV:15}. In the $\ell_2$ mean vector estimation problem the goal is to estimate $\bar{x} \doteq \E_{\bx\sim D}[\bx]$ within $\eps$ in $\ell_2$ norm, where $D$ is a distribution over vectors in $\R^d$.  Our results give a simple and general algorithm for solving this problem using the $\VSTAT$ oracle.
\begin{cor}
\label{cor:est-variance-vector-eps}
There exists a statistical query algorithm that given $\eps \in (0,1)$ and $B>\eps$, for any distribution $D$ over $\R^d$ such that $\E_{\bx\sim D}[\|\bx\|_2^2] \leq B^2$ and $\E_{\bx\sim D}[\|\bx-\bar{x}\|_2^2] \leq 1$ outputs a vector $\hat{x}$ such that $\|\hat{x}-\bar{x}\|_2 \leq \eps$, where $\bar{x} \doteq \E_{\bx\sim D}[\bx]$.
The algorithm uses $O(d(\log(dB/\eps)))$  queries to $\VSTAT_D(O((\log(1/\eps)/\eps)^2))$.
\end{cor}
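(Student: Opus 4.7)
The plan is to reduce the vector mean estimation problem to $d$ independent scalar mean estimations, one per coordinate, by applying Corollary~\ref{cor:est-variance-eps} to each of them with carefully chosen parameters so that the coordinate errors sum (in the $\ell_2$ sense) to at most $\eps$.

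For each coordinate $i \in [d]$, I would define the scalar query function $\phi_i(x) \doteq x_i$. Two observations drive the reduction. First, $D[\phi_i^2] = \E_{\bx\sim D}[\bx_i^2] \leq \E_{\bx\sim D}[\|\bx\|_2^2] \leq B^2$, so the second-moment bound needed by Corollary~\ref{cor:est-variance-eps} holds with the same $B$. Second, the coordinate variances $\sigma_i^2 \doteq D[\phi_i^2]-D[\phi_i]^2 = \mathrm{Var}(\bx_i)$ satisfy $\sum_{i\in[d]} \sigma_i^2 = \E_{\bx\sim D}[\|\bx-\bar x\|_2^2]\leq 1$.

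Now I would choose $\eps_0 \doteq \eps/2$ and $\zeta_0 \doteq \eps/(2\sqrt{d})$ and invoke Corollary~\ref{cor:est-variance-eps} on each $\phi_i$ with these parameters to obtain values $v_i$ satisfying $|v_i - D[\phi_i]| \leq \eps_0 \sigma_i + \zeta_0$. Setting $\hat x \doteq (v_1,\ldots,v_d)$, the triangle inequality in $\ell_2$ applied to the vectors $(\eps_0\sigma_i)_i$ and $(\zeta_0)_i$ gives
\[
\|\hat x - \bar x\|_2 \;\leq\; \sqrt{\sum_{i\in[d]}(\eps_0\sigma_i+\zeta_0)^2} \;\leq\; \eps_0\sqrt{\sum_{i\in[d]}\sigma_i^2} + \sqrt{d}\,\zeta_0 \;\leq\; \eps_0 + \sqrt{d}\,\zeta_0 \;=\; \eps,
\]
which is the desired accuracy guarantee.

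For the query complexity, each of the $d$ invocations of Corollary~\ref{cor:est-variance-eps} uses $O(\log(B/(\eps_0\zeta_0))) = O(\log(\sqrt{d}\,B/\eps^2)) = O(\log(dB/\eps))$ queries to $\VSTAT_D(O((\log(1/\eps_0)/\eps_0)^2)) = \VSTAT_D(O((\log(1/\eps)/\eps)^2))$, so the total query count is $O(d\log(dB/\eps))$ to the same $\VSTAT$ oracle, matching the statement. There is no real obstacle here: the entire argument is a routine concatenation of per-coordinate bounds, and the only step requiring any care is matching $\eps_0$ and $\zeta_0$ so that the additive $\zeta_0$ error, summed in $\ell_2$ across $d$ coordinates, contributes only $O(\eps)$ rather than blowing up by a $\sqrt{d}$ factor.
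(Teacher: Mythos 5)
Your proposal is correct and follows essentially the same approach as the paper: reduce to per-coordinate scalar mean estimation with the additive term $\zeta$ scaled down by $1/\sqrt{d}$, then combine the coordinate errors via the $\ell_2$ triangle inequality and the observation that the coordinate variances sum to $\E[\|\bx-\bar x\|_2^2]\leq 1$. The only cosmetic difference is that you invoke Corollary~\ref{cor:est-variance-eps} directly while the paper invokes the equivalent Theorem~\ref{thm:est-variance-n-intro}.
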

\begin{proof}
We apply the algorithm given in Thm.~\ref{thm:est-variance-n-intro} to each of the coordinates of $\bx$ with $\zeta = \eps/\sqrt{2 d}$. Namely, for each $i\in [d]$ we use Thm.~\ref{thm:est-variance-n-intro} to estimate the expectation of function $\phi_i(x) = x_i$ and let $\hat{x}_i$ be the result. For every $i \in [d]$, $\E_{\bx\sim D}[\|\bx_i\|_2^2] \leq B^2$ and hence, by setting $n = c (\log(1/\eps)/\eps)^2$ for an appropriately chosen constant $c$, we will obtain that the error in the estimation of coordinate $i$ is at most
$$\frac{\eps}{\sqrt{2}} \cdot \sqrt{ \E_{\bx\sim D}[|\bx_i|^2] - \bar{x}_i^2} +  \frac{\eps}{\sqrt{2 d}}.$$
By observing that $$\sum_{i\in [d]}  \E_{\bx\sim D}[|\bx_i|^2] - \bar{x}_i^2 = \E_{\bx\sim D}[\|\bx-\bar{x}\|_2^2] \leq 1$$ we obtain the claim.
\end{proof}
We remark that if the SQ algorithm $\A$ is non-adaptive then the simulation in Theorem \ref{th:stat-from-unbiased} produces a non-adaptive algorithm. This is particularly useful in the distributed setting since it allows the bits from each of the samples to be communicated in parallel and does not require any communication back to the clients.

These corollaries can in turn be translated to the 1-bit sampling model.

Combining Corollary \ref{cor:est-variance-eps} and Theorem \ref{th:stat-from-unbiased} we obtain the following algorithm for estimating the mean.
\begin{cor}
\label{cor:est-variance-vector-sample}
There exists an algorithm that given $\eps,\delta \in (0,1)$ and $B>\eps$, for any distribution $D$ over $\R^d$ such that $\E_{\bx\sim D}[\|\bx\|_2^2] \leq B^2$ and $\E_{\bx\sim D}[\|\bx-\bar{x}\|_2^2] \leq 1$ outputs a vector $\hat{x}$ such that, with probability at least $1-\delta$, $\|\hat{x}-\bar{x}\|_2 \leq \eps$, where $\bar{x} \doteq \E_{\bx\sim D}[\bx]$.
The algorithm uses $\tilde O(d/\eps^2 \cdot \log(B/(\delta\eps)))$  queries to $\COMM_D$.
\end{cor}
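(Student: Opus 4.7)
The plan is to run the statistical query algorithm from Corollary \ref{cor:est-variance-vector-eps} to produce the estimate $\hat{x}$, and then instantiate each of its $\VSTAT_D$ queries via the simulation of Theorem \ref{th:stat-from-unbiased} on $\COMM_D$. Because Corollary \ref{cor:est-variance-vector-eps} applies the scalar estimator (Corollary \ref{cor:est-variance-eps}) coordinate-wise with $\zeta=\eps/\sqrt{2d}$, the hypotheses $\E[\|\bx\|_2^2]\le B^2$ and $\E[\|\bx-\bar x\|_2^2]\le 1$ transfer directly to the per-coordinate moment bounds needed there, so the SQ half of the argument is already done for us.

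First I would record the parameters delivered by Corollary \ref{cor:est-variance-vector-eps}: the algorithm makes $q=O(d\log(dB/\eps))$ queries to $\VSTAT_D(n_0)$ with $n_0=O((\log(1/\eps)/\eps)^2)$, and the queries are real-valued scalar queries of the form considered in Theorem \ref{th:stat-from-unbiased}. Next I would invoke Theorem \ref{th:stat-from-unbiased} with failure probability $\delta$: it produces an algorithm $\B$ that correctly answers all $q$ queries (within the $\VSTAT_D(n_0)$ tolerance) with probability $\ge 1-\delta$, using $O(q\,n_0\,\log(q/\delta))$ queries to $\COMM_D$. Conditioning on this event, the output of $\B$ fed into the SQ procedure reproduces the guarantee $\|\hat x-\bar x\|_2\le \eps$.

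Finally I would collect the $\COMM_D$ query count. Substituting,
\[
q\,n_0\,\log(q/\delta)=O\!\left(d\log(dB/\eps)\cdot\frac{\log^2(1/\eps)}{\eps^2}\cdot\log\!\left(\tfrac{d\log(dB/\eps)}{\delta}\right)\right).
\]
All factors other than $d/\eps^2$ and $\log(B/(\delta\eps))$ are polylogarithmic in $d$, $1/\eps$, $B$, and $1/\delta$, so the bound collapses to $\tilde O(d/\eps^2 \cdot \log(B/(\delta\eps)))$ as claimed. The only subtlety I expect is bookkeeping: making sure the $\log(dB/\eps)$ and $\log(1/\delta)$ factors (and the $\log\log$ contributions from $n_0$) are absorbed correctly into the $\tilde O(\cdot)$ notation, which is routine.
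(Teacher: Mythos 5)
Your proposal matches the paper's argument exactly: the paper proves this corollary by combining the vector SQ estimator (obtained from the scalar result applied coordinate-wise) with the $\COMM_D$ simulation of $\VSTAT_D$ in Theorem \ref{th:stat-from-unbiased}, then multiplying out $q \cdot n_0 \cdot \log(q/\delta)$ and folding the residual polylogarithmic factors into the $\tilde O$. Your bookkeeping of $q=O(d\log(dB/\eps))$ and $n_0 = O((\log(1/\eps)/\eps)^2)$ is correct, and the absorption of $\log d$, $\log\log$, and the extra $\log$ products into $\tilde O$ is the same implicit step the paper takes.
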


We remark that in the high dimensional setting linear dependence on $d$ is unavoidable for the $1$-bit sampling model \citep{ZhangDJW13} whereas given entire samples, the achievable accuracy is dimension independent. 

\subsection*{Acknowledgements}
I thank the anonymous reviewers of Algorithmic Learning Theory 2017 conference for insightful comments and careful proofreading of this work.

%\bibliography{vf-allrefs-central}

\printbibliography

\end{document}